\documentclass{article}

\usepackage{arxiv}

\usepackage[utf8]{inputenc} 
\usepackage[T1]{fontenc}    
\usepackage{hyperref}       
\usepackage{url}            
\usepackage{booktabs}       
\usepackage{amsfonts}       
\usepackage{nicefrac}       
\usepackage{microtype}      
\usepackage{lipsum}		
\usepackage{graphicx}
\usepackage[numbers]{natbib}\usepackage{doi}
\usepackage{color}
\usepackage{amsmath,amssymb,amsthm}
\newtheorem{lemma}{Lemma}

\newtheorem{assum}{Assumption}
\newtheorem{theorem}{Theorem}

\newtheorem{remark}{Remark}
\newtheorem{problem}{Problem}

\newtheorem{definition}{Definition}
\newtheorem{example}{Example}

\newcommand{\IEEEPARstart}[2]{#1#2}
\newcommand{\uqed}{\hfill\ensuremath{\square}}
\newcommand{\lb}{[}
\newcommand{\rb}{]}

\title{From Non-Rigid to Rigid: Safe Acquisition of Rigid Communication Graphs under Limited Sensing}

\author{ \href{https://orcid.org/0000-0003-0212-0762}{\includegraphics[scale=0.06]{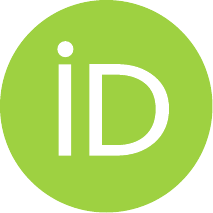}\hspace{1mm}S. Saharsh} \\
	Department of Cyber Physical Systems\\
	Indian Institute of Science\\
	Bengaluru, 560012, India \\
	\texttt{saharsh2021@iisc.ac.in} \\
        \And
	  Vedhas Talnikar \\
	Department of Cyber Physical Systems\\
	Indian Institute of Science\\
	Bengaluru, 560012, India\\
	\texttt{talnikarvedhas@gmail.com} \\
        \And
	Pushpak Jagtap \\
	Department of Cyber Physical Systems\\
	Indian Institute of Science\\
	Bengaluru, 560012, India\\
	\texttt{pushpak@iisc.ac.in} \\
}

\renewcommand{\shorttitle}{\textit{arXiv} Template}

\begin{document}
\maketitle

\begin{abstract}
Communication graph rigidity is a fundamental requirement in many multi-robot formation control approaches. However, ensuring and maintaining a rigid communication topology becomes challenging in practice due to limited sensing ranges and dynamic operating conditions. This paper provides a method for achieving an inter-robot collision-free, rigid time-varying communication graph, where communication links are established or broken according to limited sensing ranges, without assuming an initial rigid graph. In addition, the proposed approach guarantees the realization of a rigid graph for heterogeneous nonlinear multi-robot systems. A computationally lean, distributed quadratic optimization-based controller is developed for a leader–follower architecture, acquiring rigidity based on hierarchical second-order consensus among robots. Follower agents do not require global absolute positions of any agent, including their own. The proposed method is validated through both simulations and hardware experiments in a motion-capture environment, demonstrating reliable performance under the limited sensing capabilities of individual robots. For paper walkthrough: \url{https://youtu.be/PyHoO0E9oLg}.
\end{abstract}

\keywords{Control barrier functions, inter-agent safety, limited sensing range, non-rigid communication graphs}

\section{Introduction}
\label{sec:introduction}
\IEEEPARstart{C}{ollective} motion in nature, such as in flocks of birds, schools of fish, and colonies of ants, demonstrates how individuals can coordinate their movement while preserving spatial organization within a group. Inspired by such biological systems, multi-robot systems (MRS) have gained significant attention in civilian and military applications, including collaborative transportation, environmental monitoring, and border surveillance \cite{baillieul2007control}. In these applications, multiple autonomous robots cooperate to achieve complex tasks more efficiently than a single robot operating alone. A key requirement in such cooperative missions is the ability to move the group as a rigid body or maintain a desired unique formation while ensuring safe operation in dynamic environments \cite{bullo2009distributed}.

Formation maintenance in MRS \cite{liu2024survey} fundamentally depends on the underlying interaction topology among robots. Rigid graphs are essential to achieve unique formation shapes while using the least number of inter-robot connections \cite{anderson2003operations}. This reduction in communication links directly reduces communication and sensing requirements, thereby improving scalability and energy efficiency in distributed multi-robot systems \cite{anderson2018rigid, eren2004rigid}. However, the acquisition and maintenance of rigid communication remains the central problem in distributed formation control, and this can be illustrated through a simple example.
\begin{example}
\label{CBRS: ex/treenr}
Consider four robots connected in a tree (chain) structure 
$1\!-\!2\!-\!3\!-\!4$. Although the graph is connected with 
$n-1=3$ edges, it does not preserve a unique formation shape through the tree edges alone, i.e., it is not rigid in $\mathbb{R}^2$. The formation 
can bend into a ring while preserving all existing edge distances, causing 
unconstrained distances, such as between agents $1$ and $4$, to vary. In general, rigidity in 2D requires at least $2n-3$ edges, carefully distributed across all agents, for $n=4$, this means at least $5$ edges are needed. \uqed
\end{example}
Despite significant progress in formation control, most existing works \cite{zhao2016localizability, pampatwar2021planar, rayabagi2024formation} assume that a rigid communication graph is already available and maintained throughout the mission. However, in practical scenarios, agents are often subject to limited sensing ranges \cite{bhatia2025decentralized} and restricted fields of view \cite{dias2016distributed}, making rigid graph acquisition itself a challenging problem.
Therefore, this work aims to develop a systematic method for MRS, to search and create requisite edges, distributed across agents, ensuring rigidity under sensing-range limitations in the 2D plane.

Alongside rigidity acquisition, inter-agent collision avoidance remains a critical challenge in practical MRS deployment. Since agents operate in close proximity while maneuvering cooperatively, safety constraints must be enforced without destroying the desired formation structure or rigid interaction topology. To address this issue, distributed safe control schemes based on Control Barrier Functions (CBFs) \cite{wang2017safety, tan2022distributed} integrated with quadratic programming (QP) have been widely adopted for real-time collision avoidance and constraint satisfaction. Collision cone-based CBFs, such as C3BF \cite{tayal2024collision}, guarantee safety only for obstacles moving at constant velocity, limiting inter-agent safety in dynamic scenarios.

Furthermore, achieving collision-free motion during rigid graph formation under a limited sensing range can result in deadlocks, where agents remain indefinitely engaged in pursuing rigidity acquisition while repeatedly stopping due to safety constraints imposed by neighboring agents. Only a limited number of studies investigate rigid graph requirements \cite{frank2018bearing} under field-of-view constraints, and these generally assume arbitrarily large sensing ranges. However, graph rigidity or connectivity maintenance \cite{zelazo2015decentralized, zelazo2012rigidity} is widely studied, but achieving rigidity along with inter-agent safety from a non-rigid graph is not explored. Therefore, developing distributed methods for rigid graph acquisition with collision avoidance under limited sensing range remains an important problem in multi-agent formation control.
This work addresses the problem with three main contributions in the following.
\begin{itemize}
    \item A novel CBF-QP-like optimization framework for acquiring rigidity and ensuring inter-agent safety within a \emph{heterogeneous} multi-agent system with limited sensing range,
    \item Formal guaranty for forming a rigid graph with dynamic communication links in a leader-follower framework, including agents with different sensing ranges, using \emph{only} relative positions and velocities of neighboring agents, without agent's own global position and velocity,
    \item Guaranteed collision-free motion for accelerating nonlinear agents, extending \cite{tayal2024collision} beyond constant-velocity obstacles, with control parameters derived from explicit stability conditions instead of trial-and-error tuning.
\end{itemize}
The \emph{key novelty} of the proposed framework lies in the indirect shaping of the time-varying part of the graph through control actions defined exclusively on the time-invariant subgraph, where global reference frame information is not required for the controlled agents, suitable for GPS-denied environments. Specifically, using the relative state information of the neighboring agents, the evolution of agent states over the time-invariant subgraph induces a structured reconfiguration of edges in the time-varying subgraph, resulting in the emergence of a rigid subgraph. By construction, the minimal rigidity of the subgraph guarantees the rigidity of the overall communication graph.

\section{Preliminaries and Problem Formulation}
\label{CBRS: sec/prelim and prob defn}
\subsection{Notations}
 Scalars are denoted as $x \in \mathbb{R}$, and vectors as $\boldsymbol{x} \in \mathbb{R}^n$ (bold letter). For $a, b \in \mathbb{N}$ and $a < b$, we use the notation $\lb a,\ b \rb_{\mathbb{N}}$ to represent the closed interval in $\mathbb{N}$. For $\boldsymbol{x} \in \mathbb{R}^n$,  $\|\boldsymbol{x}\|$ denotes its Euclidean norm. Let $\boldsymbol{1}:= [1,\ 1, \cdots, 1]^{\top}$, $\boldsymbol{0}:= [0,\ 0, \cdots, 0]^{\top}$, $\mathbf{e}_{n} := [1, 0, \cdots, 0]^{\top}$ of size $n$.
 Given a set $\mathcal{C}$, $\partial \mathcal{C}$ denotes its boundary. The cardinality of the set $A$ is denoted by $|A|$. The function $\kappa: \mathbb{R} \rightarrow \mathbb{R}$ is of extended class $\mathcal{K}_{e}$ if it is continuous, $\kappa(0)=0$, and strictly increasing. 
The closed ball $\mathbb{B}(\boldsymbol{c},\lambda) \subseteq \mathbb{R}^2$ is defined as $\mathbb{B}(\boldsymbol{c}, \lambda) := \{ \boldsymbol{x} \in \mathbb{R}^2 : \|\boldsymbol{x} - \boldsymbol{c}\| \leq \lambda\}$, where $\boldsymbol{c}\in \mathbb{R}^2$ is the center and $\lambda \in \mathbb{R}^+$ is the radius. All other notation follows standard mathematical conventions.

\subsection{System Description}
We consider a heterogeneous multi-agent system (MAS) consisting of $n$ agents, where each agent is indexed by $i \in \lb 1,\ n \rb_{\mathbb{N}}$. Each $i^{th}$ agent (modeled as a planar robot) is represented as:
\begin{align}
        \dot{\boldsymbol{p}}_i = \boldsymbol{v}_i, 
        \dot{\boldsymbol{v}}_i = f_{v_i}(\boldsymbol{p}_i,\boldsymbol{v}_i) + h_{v_i}(\boldsymbol{p}_i,\boldsymbol{v}_i, \delta_i)\boldsymbol{u}_i, 
        \label{CBRS: eqn/sys}
\end{align}
    where $\boldsymbol{p}_i,\boldsymbol{v}_i,\boldsymbol{u}_i\in \mathbb{R}^2$ denote the position in the 2D plane, velocity, and input vectors, respectively, and $\delta_i \in (-\pi,\pi)$ denotes the measurable heading angle. The maps $f_{v_i} : \mathbb{R}^2 \times \mathbb{R}^2 \rightarrow \mathbb{R}^2, f_{v_i}(\boldsymbol{0}, \boldsymbol{0}) = \boldsymbol{0},$ and $h_{v_i}: \mathbb{R}^2 \times \mathbb{R}^2 \times (-\pi, \pi)\rightarrow \mathbb{R}^{2 \times 2}, h_{v_i}(\boldsymbol{0}, \boldsymbol{0}, 0) = I_2,$ are globally Lipschitz continuous functions with Euclidean norm's Lipschitz constants $\Gamma_{if}, \Gamma_{ih},$ respectively. The system is controllable and $\det(h_{v_i}(\boldsymbol{p}_i,\boldsymbol{v}_i, \delta_i)) \neq 0, \forall \boldsymbol{p}_i,\boldsymbol{v}_i \in \mathbb{R}^2, \delta_i \in (-\pi, \pi)$. For notational simplicity, we use the shorthand notation $h_{v_i}(\boldsymbol{p}_i,\boldsymbol{v}_i) := h_{v_i}(\boldsymbol{p}_i,\boldsymbol{v}_i, \delta_i)$, since each agent knows its heading angle $\delta_i$, but not its global state variables $\boldsymbol{p}_i,\boldsymbol{v}_i$. The input gain matrix $h_{v_i}$ is generally non-symmetric due to the rotational effects induced by the heading angle $\delta_i \in (-\pi,\pi)$. So, we assume that
    \begin{assum}
         \label{CBRS: assum/A2}
         For each agent $i$, there exists a matrix $g_i(\delta_i) \in \mathbb{R}^{2 \times 2}$ such that the matrix $h_{v_i}(\boldsymbol{p}_i,\boldsymbol{v}_i)g_i(\delta_i)$ is symmetric and positive definite. Also, we have $\boldsymbol{x}^{\top}h_{v_i}(\boldsymbol{p}_i,\boldsymbol{v}_i)g_i(\delta_i)\boldsymbol{x} \leq \mu_{\max}(h_{v_i}(\boldsymbol{p}_i,\boldsymbol{v}_i)g_i(\delta_i))\|\boldsymbol{x}\|^2 \leq\Gamma_{ig}(\|\boldsymbol{p}_i\| + \|\boldsymbol{v}_i\|)\|\boldsymbol{x}\|^2, \Gamma_{ig} \in \mathbb{R}^{+}$, for any $\boldsymbol{x} \in \mathbb{R}^2$, where $\mu_{\max}(.)$ is the maximum eigenvalue and $\det(h_{v_i}(\boldsymbol{p}_i,\boldsymbol{v}_i)g_i(\delta_i)) \geq \Gamma_i, \Gamma_i \in \mathbb{R}^{+}$.
    \end{assum}
    \begin{assum}
        \label{CBRS: assum/A4}
    Given $\mathcal{V}$ as the set of agents, we assume that there exist constants
    $\bar{\Delta}, \bar{\epsilon} \in \mathbb{R}_0^+$ such that, for every pair of
    agents $i,k \in \mathcal{V}$ and for all $(\boldsymbol{p},\boldsymbol{v})
    \in \mathbb{R}^2 \times \mathbb{R}^2$,
    \begin{align}
        \big\|f_{v_i}(\boldsymbol{p},\boldsymbol{v})
            - f_{v_k}(\boldsymbol{p},\boldsymbol{v})\big\|
            &\leq \bar{\Delta}, \label{eq:flipsch_c}\\
        \big\|h_{v_i}(\boldsymbol{p},\boldsymbol{v})
            - h_{v_k}(\boldsymbol{p},\boldsymbol{v})\big\|
            &\leq \bar{\epsilon}. \label{eq:hlipsch_c}
    \end{align}
    Also, for any pair of agents with the same system dynamics, i.e., $f_{v_i} = f_{v_k}, h_{v_i} = h_{v_k}$, $\bar{\Delta} = \bar{\epsilon} = 0$.
    \end{assum}
\begin{remark}
    A broad class of planar robotic systems can be modeled as \eqref{CBRS: eqn/sys}, for example, a simple double-integrator, Ackermann-drive vehicles, differential-drive robots as well as aerial vehicles such as drones and fixed-wing aircraft operating in altitude hold mode. For a detailed derivation, especially for a system with orientation state, refer to \cite[Section II B]{sawarkar2026sliding}. The input gain matrix for a simple double-integrator $h_{v_i}(\boldsymbol{p}_i,\boldsymbol{v}_i)$ is an identity matrix with $g_i(\delta_i) = I_2$ as it does not have a heading angle. For an Ackermann-drive with small slip or differential-drives, we have $h_{v_i}(\boldsymbol{p}_i,\boldsymbol{v}_i,\delta_i) = { \left[ \begin{matrix} \cos(\delta_i) & -\|\boldsymbol{v}_i\| \sin(\delta_i) \\ \sin(\delta_i) & \|\boldsymbol{v}_i\| \cos(\delta_i) \end{matrix} \right]}$ with $g_i(\delta_i) = { \left[ \begin{matrix} \cos(\delta_i) & - \sin(\delta_i) \\ \sin(\delta_i) &  \cos(\delta_i) \end{matrix} \right]}^{\top}$, which makes the matrix $h_{v_i}g_i$ symmetric and positive definite, thereby satisfying the Assumption \ref{CBRS: assum/A2}. Also, we can observe that $\det(h_{v_i}g_i) \geq v_{\min} = \Gamma_i$, where $v_{\min}$ is the minimum positive speed of the vehicle.
\end{remark}

\subsection{Interleaved Layered Directed Acyclic Graph Architecture}

Consider a group of $n \geq 2$ agents whose communication network is represented as a time-varying directed graph $\mathcal{G}(t) := (\mathcal{V}, \mathcal{E}(t), \mathcal{A}(t))$. Here, $\mathcal{V} = \lb 1, n\rb_{\mathbb{N}}$ is the set of vertices, $\mathcal{E}(t) \subseteq \mathcal{V} \times \mathcal{V}$ is the set of edges at time $t \in \mathbb{R}_{0}^{+}$, and $\mathcal{A}(t) = [a_{ij}(t)]_{n \times n}$ is the weighted adjacency matrix at time $t \in \mathbb{R}_{0}^{+}$, where $a_{ij}(t) \in \mathbb{R}_{0}^{+}, a_{ii}(t) = 0$. A directed edge of $\mathcal{G}(t)$ is denoted by $(i,j)$, where $j$ is the parent of $i$ and $i$ is the child of $j$. The parent and grandparent of the vertex $i$ are denoted by $pr(i)$ and $pr^{(2)}(i)$, respectively. Each agent $i \in \mathcal{V}$ has a sensing range $\lambda_i \in \mathbb{R}^+$. The edge condition $(i,j) \in \mathcal{E}(t) \iff a_{ij}(t) > 0 \iff \boldsymbol{p}_j(t) \in \mathbb{B}(\boldsymbol{p}_i(t), \lambda_i)$ signifies that agent $i$ can receive information from agent $j$. Otherwise, $(i,j) \not \in \mathcal{E}(t) \iff a_{ij}(t) = 0$. Since sensing ranges may differ across agents, $(i,j) \in \mathcal{E}(t)$ does \emph{not} imply $(j,i) \in \mathcal{E}(t)$, favoring the directed graph topology. We consider a leader-follower framework, where the set $\mathcal{V}$ is divided into a leader set $\mathcal{V}_L \subseteq \mathcal{V}$ with $|\mathcal{V}_L| = n_l \geq 2$ and a follower set $\mathcal{F} := \mathcal{V} \setminus \mathcal{V}_L$. Leaders have access to their absolute positions in the global frame, whereas followers have access to relative positions of their neighbors within the sensing range. We \emph{assume} that there exists a time-invariant spanning forest\footnote{A spanning forest is a collection of disjoint spanning trees. 
$^2$A directed tree is a connected directed acyclic graph (DAG) where exactly one node (the root) has no parents, and every other node has exactly one parent.
$^3$Disjoint trees share no common vertex or node.}  $\mathcal{G}^{F} := (\mathcal{V}, \mathcal{E}^{F}, \mathcal{A}^{F})$ that is preserved under the time-varying edges of $\mathcal{G}(t)$. 
The time-invariant spanning forest $\mathcal{G}^{F}$, required as a prerequisite, is structured as a forest of directed trees$^2$, one per leader, growing outward from the leaders through successive generations of followers. 

\begin{definition}[$q$-Rooted Directed Forest]
The graph $\mathcal{G}^{F}$ is a $q$-rooted directed forest if it decomposes into $q$ disjoint directed trees$^3$, rooted at a distinct set of $q$ nodes, i.e., $\lb1, q \rb_{\mathbb N}$. 
\end{definition}
In our case, we consider $\mathcal{G}^{F}$ as an $n_l$-Rooted Directed Forest, rooted at distinct leaders. Each tree 
$\mathcal{T}_l:=
(\mathcal{V}_{\mathcal{T}_l},
\mathcal{E}_{\mathcal{T}_l},
\mathcal{A}_{\mathcal{T}_l})$ is 
rooted at leader $l\in\mathcal{V}_L$. The vertex set of each tree $\mathcal{T}_l$ is denoted by $\mathcal{V}_{\mathcal{T}_l}$, and the collection $\{\mathcal{V}_{\mathcal{T}_l}\}_{l \in \mathcal{V}_L}$ forms a partition of $\mathcal{V}$. 
Next, we organize agents across all trees into distinct layers, and enforce a top-down structure on each tree $\mathcal{T}_l$ with a uniform branching factor $k \geq 1$ and an interleaved Breadth-First Indexing (iBFI) across layers, inspired by BFI in \cite{chepoi1998distance}.
\begin{figure}[!h]
\centering
\includegraphics[scale = 0.6]{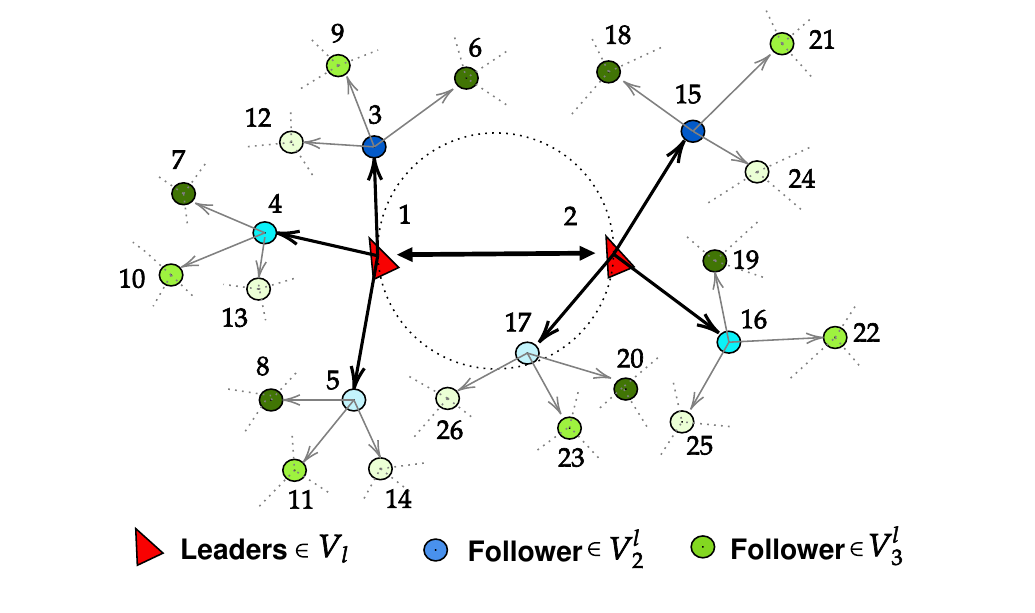}
\caption{Illustration of iBFI in iLDAG with $n_l = 2$, $k = 3$, $d = 3$. Leaders (in red) form a subset $V_1$. 
Subset $V_2$ agents are shown in blue and subset $V_3$ in green, with color intensity increasing with sibling rank $\sigma_{pr(i)}(i)$: the darkest node corresponds to $\sigma = 1$, and the lightest to $\sigma = 3$.}
\label{CBRS: fig/com_graph}
\end{figure}
\begin{definition}[Interleaved LDAG]
\label{CBRS: def/iLDAG}
Given an $n_l$-rooted directed forest $\mathcal{G}^{F}:= (\mathcal{V}, \mathcal{E}^{F}, \mathcal{A}^{F})$, we define the Interleaved Layered Directed Acyclic Graph (iLDAG), denoted by $\mathcal{G}^{F}_{\mathrm{iLDAG}}$, as the structure obtained when each tree $\mathcal{T}_l$, with vertex set $\mathcal{V}_{\mathcal{T}_l}$, is partitioned into $d$ disjoint layers: $\mathcal{V}_{\mathcal{T}_l} = \bigcup_{s=1}^{d} V_s^{(l)}$ with $V_1^{(l)} = \{l\}, \bigcup_{s=2}^{d} V_s^{(l)} = \mathcal{V}_{\mathcal{T}_l} \setminus \{l\}, \forall l \in \mathcal{V}_L$, and there exists a linear extension ordering on $\mathcal{V}$ satisfying the following rules.
\begin{enumerate}
    \item \textit{Interleaved Breadth-First Indexing:} Agents are indexed layer by layer in increasing order for all $l \in \mathcal{V}_L$, where $V_1^{(l)} = \{l\},$ receives the lowest indices: $\bigcup_{l \in \mathcal{V}_L}V_1^{(l)} = \lb 1, n_l \rb_{\mathbb{N}}$, followed by $V_2^{(l)}$, and so on, such that every edge $(i,j) \in \mathcal{E}^{F}$ with $i \in V_s^{(l)}$, satisfies $j \in V_{s-1}^{(l)}$ under the order $i > j$. Also, within each layer $V_s^{(l)}$, let $\sigma_j(i) \in \lb1, k\rb_{\mathbb{N}}$, where $k$ is the branching factor, denote the sibling rank (birth-order) of agent $i$ among the children of its parent $j$. For leaders with no parents, $\sigma_0(l) = k, \forall l \in \mathcal{V}_{L}$.  \\
    More importantly, agents $i \in V_{s}^{(l)}$ are distributed such that all parents $j \in V_{s-1}^{(l)}$ in the preceding layer receive their {$r^{th}$ ranked child (where $r = \sigma_j(i)$ for child $i$) before any parent receives their $(r+1)^{th}$ ranked child.} 
    All agents in $V_s^{(l)}, \forall l \in \mathcal{V}_L$ have the same sensing range. 
    \item \textit{Recursive Directed Star Expansion:} Each tree $\mathcal{T}_{l}$ is {recursively defined} at every node $j\in\mathcal{V}_{\mathcal{T}_l}$. Let the child set of node $j$ be $\mathcal{S}_j:= \{\,i\in\mathcal{V}_{\mathcal{T}_l}:(i,j)\in\mathcal{E}_{\mathcal{T}_l}\,\}.$ Each agent $j\in V_s^{(l)}$, for $s\in\lb 1,d-2\rb_{\mathbb{N}}$, has exactly $k$ children, i.e., $|\mathcal{S}_j|=k.$
    Then, the subtree rooted at node $j$ is recursively defined as $ \mathcal{V}_{\mathcal{T}_j}
 :=
 \{j\}\cup
 \bigcup_{i\in\mathcal{S}_j}
\mathcal{V}_{\mathcal{T}_i}, 
\mathcal{E}_{\mathcal{T}_j}
:=
\bigcup_{i\in\mathcal{S}_j}
\left(
\mathcal{E}_{\mathcal{T}_i}\cup\{(i,j)\}
\right), 
\mathcal{A}_{\mathcal{T}_j}
:=
\begin{bmatrix}
0 & \mathbf{0}^{\top} \\
\boldsymbol{c}_j &
\mathrm{diag}
\left(
\mathcal{A}_{\mathcal{T}_{i_1}},
\dots,
\mathcal{A}_{\mathcal{T}_{i_k}}
\right)
\end{bmatrix},$
where $\mathcal{S}_j=\{i_1,\dots,i_k\}$ and $\boldsymbol{c}_j
=\begin{bmatrix}
a_{i_1j}\mathbf{e}_{n_{i_1}} &
\cdots &
a_{i_kj}\mathbf{e}_{n_{i_k}}
\end{bmatrix}^{\top},$ with $n_{i_r}=|\mathcal{V}_{\mathcal{T}_{i_r}}|$. Agents $j\in V_d^{(l)}$ have no children, i.e., $\mathcal{S}_j=\emptyset$.

\item \textit{Leaders rigid subgraph:} All the leaders $l \in \mathcal{V}_{L}$ form a rigid subgraph (as defined in Laman's Theorem~{\cite[Theorem 5]{anderson2003operations}}) with at least $2n_l - 3$ edges, making the minimum edge count as $\underbrace{n - n_l}_{trees} \ + \underbrace{2n_l - 3}_{leader \ subgraph} = n + n_l - 3.$
\end{enumerate}
\end{definition}
\begin{remark}
The rigidity requirement on the leaders' subgraph in Definition \ref{CBRS: def/iLDAG} is not conservative, as leaders can use global information to coordinate and acquire rigidity.
\end{remark}
Note that in iLDAG, since each non-leaf agent $i$, i.e., $i \in V_s^{(l)}, s \in \lb 1, d-1\rb_{\mathbb N}$, can have at most $k$ children, the layer $s$ capacity satisfies $\sum_{l = 1}^{n_l}|V_s^{(l)}| \leq n_l  k^{s-1}$. Hence, the capacity of the iLDAG network is defined as \\$\mathcal{N}_k
:=
n_l \sum_{s=1}^{d} k^{s-1}
=
\begin{cases}
n_l \dfrac{k^{d}-1}{k-1}, & k > 1, \\
n_l d, & k = 1.
\end{cases}$

\begin{table}[t]
\centering
\caption{Summary of Notations for iLDAG Spanning Forest-- Definition \ref{CBRS: def/iLDAG}}
\label{CBRS: tab/notations}
\renewcommand{\arraystretch}{1.1}
\setlength{\tabcolsep}{6pt}

\begin{tabular}{|c|p{8cm}|}
\hline
\textbf{Notation} & \textbf{Description} \\
\hline

$\mathcal{T}_l$ & Tree rooted at leader $l$ \\
\hline
$\mathcal{V}_{\mathcal{T}_l}$ & Vertex set of tree $\mathcal{T}_l$ \\
\hline
$V_s^{(l)}$ & Layer-$s$ agents of $\mathcal{T}_l$ \\
\hline
$d$ & Number of layers \\
\hline
$k$ & Branching factor \\
\hline
$\mathcal{S}_j$ & Child set of agent $j$ \\
\hline
$pr(i)$ & Parent of agent $i$ \\
\hline
$\sigma_{pr(i)}(i)$ & Sibling rank of child $i$ \\
\hline
$(i,j)$ & Edge from parent $j$ to child $i$ \\
\hline
$\lambda_i$ & Sensing range of agent $i$ \\
\hline
$\boldsymbol{p}_i(t)$ & Position of agent $i$ \\
\hline
$\mathcal{N}_k$ & Maximum network capacity \\
\hline
$\mathcal{G}(t)$ & Communication graph \\
\hline
$\hat{\mathcal{G}}(t)$ & Maintenance graph \\
\hline
$\mathcal{E}(t)$ & Directed edge set \\
\hline
$\hat{\mathcal{E}}(t)$ & Maintenance links \\
\hline
$\mathcal{G}^{F}$ & Spanning forest \\
\hline
${\mathcal{G}}^{C}(t)$ & Time-varying subgraph \\
\hline
$\mathcal{E}^{C}(t)$ & Time-varying edges \\
\hline
$\mathcal{E}^{F}$ & Spanning tree edges \\
\hline

\end{tabular}
\end{table}










To reduce the notational burden on the reader, a summary of the iLDAG notations is provided in Table \ref{CBRS: tab/notations}. The iLDAG construction is further illustrated through the following example.
\begin{example}
\label{ex:interleaved_iLDAG}
Consider $n = 26$ agents with $n_l = 2$ leaders and branching factor $k = 3$ 
(cf.\ Fig.~\ref{CBRS: fig/com_graph}). The leader set (in red in Fig.~\ref{CBRS: fig/com_graph}) is $\mathcal{V}_L = \{1, 2\}$ with $V_1^{(1)} = \{1\}$ and $V_1^{(2)} = \{2\}$. The corresponding tree vertex sets are $\mathcal{V}_{\mathcal{T}_1} = \{1\} \cup \lb 3, 14 \rb_{\mathbb{N}}$, $\mathcal{V}_{\mathcal{T}_2} = \{2\} \cup \lb 15, 26 \rb_{\mathbb{N}}$.

Tree $\mathcal{T}_{1}$ is filled first. Leader $1$ has $k = 3$ children, 
giving layer $V_2^{(1)} = \{3, 4, 5\}$ (circles with blue shades in Fig.~\ref{CBRS: fig/com_graph}). For layer $V_3^{(1)}$ (circles with green shades in Fig.~\ref{CBRS: fig/com_graph}), the 
iBFI assigns children of lower sibling rank (shown in Fig.~\ref{CBRS: fig/com_graph} with high color intensity in each layer) across all parents 
in $V_2^{(1)}$ before proceeding to the next sibling rank (shown in Fig.~\ref{CBRS: fig/com_graph} with decreasing color intensity in each layer): $\underbrace{3 \to 6,\; 4 \to 7,\; 5 \to 8}_{\sigma_{pr(i)}(i)=1},\quad
    \underbrace{3 \to 9,\; 4 \to 10,\; 5 \to 11}_{\sigma_{pr(i)}(i)=2},\\[6pt]
    \underbrace{3 \to 12,\; 4 \to 13,\; 5 \to 14}_{\sigma_{pr(i)}(i)=3}.$
Since $V_3^{(1)}$ agents have no children left, tree $\mathcal{T}_{1}$ terminates with 
$d = 3$ layers and $|\mathcal{V}_{\mathcal{T}_{1}}| = 13$.

Tree $\mathcal{T}_{2}$ follows identically starting from leader $2$, yielding $V_2^{(2)} = \{15, 16, 17\}$ and $V_3^{(2)} = \{18, \ldots, 26\}$. Every edge $(i, pr(i))$ satisfies $i > pr(i)$, confirming the linear extension ordering. The leaders forms a ring (as shown with a bidirectional arrow in Fig. \ref{CBRS: fig/com_graph}). The total edge count is $|\mathcal{E}^{F}| = (26 - 2) + 1 = 25 = n + n_l - 3$, and the network capacity $\mathcal{N}_3 = 2 \cdot \frac{3^3 - 1}{2} = 26 = n$ is exactly met. \uqed
\end{example} 
Under this iLDAG graph structure, for any $(i,j) \in \mathcal{E}^{F}$, the relative position vector is defined as
$\boldsymbol{p}_{ij} := \boldsymbol{p}_j - \boldsymbol{p}_i$. The family of agent $i$ is denoted by $\{{pr}(i)\} \cup \mathcal{S}_{{pr}(i)}$.

\subsection{Rigidity of the Maintenance Graph}

For maneuvering multi-agent systems in formation with a minimum number of communication links, network rigidity is necessary (referred to as minimal rigidity in \cite{anderson2018rigid}). Network rigidity ensures that the entire formation in MAS can be maintained as a rigid body with few maintenance links (edges) $\hat{\mathcal{E}} \subset \mathcal{E}^{*}$, rather than the entire edge set $\mathcal{E}^{*}:= \{(i,j) \in \mathcal{V} \times \mathcal{V} : i \neq j\}$ of the complete graph. Under the iLDAG network $\mathcal{G}^{F}_{\mathrm{iLDAG}}$ in Definition~\ref{CBRS: def/iLDAG}, each follower has exactly one parent, forming a collection of directed trees with at least $n + n_l -3$ edges. A graph on $n$ vertices is not rigid considering the initial tree edges alone (cf. Example \ref{CBRS: ex/treenr}). For network rigidity in 2D, at least $2n-3$ maintenance links (or edges) are required, as established for 2D minimally rigid formations in \cite{anderson2003operations}. In addition, these $2n-3$ edges must satisfy certain structural conditions, discussed later in Theorem \ref{CBRS: thm/laman}. 

In this work, the sensing range $\lambda_i$ determines the creation and removal of edges. The sensing range $\lambda_i$ of each $s$-layer agent $i\in V_s^{(l)}$ induces edges to all agents within $\mathbb{B}(\boldsymbol{p}_i, \lambda_i)$, potentially including siblings within the same family ($\{{pr}(i)\} \cup \mathcal{S}_{{pr}(i)}$), agents from different layers, or agents across different trees. Managing all such edges does not scale well and fails to provide a systematic framework for acquiring rigidity. Thus, next we select a subset of sensed edges as \emph{maintenance links} $\hat{\mathcal{E}}$ and construct a subgraph, called the \emph{maintenance graph}, on which rigidity will be defined.
\begin{definition}[Maintenance Graph and Maintenance Links]
\label{CBRS: def/formation_graph}
Given the iLDAG communication subgraph $\mathcal{G}^{F}_{\mathrm{iLDAG}}$ in the entire graph $\mathcal{G}$, the maintenance link set $\hat{\mathcal{E}}\subseteq \mathcal{E}^{*}$ and the maintenance graph $\hat{\mathcal{G}}=(\mathcal{V},\hat{\mathcal{E}})$ are constructed based on a rank and topology-dependent rule. Let agent $i \in V_s^{(l)}$ in layer $s$ for the tree rooted at the leader $l$, with parent $pr(i) \in V_{s-1}^{(l)}$ (under the iLDAG rule (2)) and parent's sibling rank $\sigma_{pr^{(2)}(i)}(pr(i))$. Then, an edge $(i,j)$ is admissible as a maintenance link if any of the following holds.\\
(i) it satisfies the iLDAG construction rule $i > j, j=pr(i),$ or\\
(ii) if $\sigma_{pr^{(2)}(i)}(pr(i))<k$, then the $(\sigma_{pr^{(2)}(i)}(pr(i))+1)^{\text{th}}$ ranked sibling in $V_{s-1}^{(l)}$, within the same tree $\mathcal{T}_l$, acts as the parent $j$, or\\
(iii) if $\sigma_{pr^{(2)}(i)}(pr(i))=k$, then the first ranked sibling in $V_{s-1}^{(l')}$ of the lowest index, in the adjacent tree $\mathcal{T}_{l'}$, acts as the parent $j$, where $l' = (l \bmod n_l) + 1$ under cyclic ordering.
\end{definition}
We now formally define the rigidity for the maintenance links $\hat{\mathcal{E}}$ in the maintenance graph $\hat{\mathcal{G}}$.

\begin{definition}[Rigid Formation~{\cite{eren2004rigid}}]
\label{CBRS: def/rigid_form}
The maintenance graph $\hat{\mathcal{G}}$ with agent positions $\boldsymbol{p}_i\in\mathbb{R}^2$, $\forall i \in \mathcal{V}$, is \emph{rigid} in $\mathbb{R}^2$ if every smooth motion that preserves $\|\boldsymbol{p}_{ij}\|$ for all $(i,j) \in \hat{\mathcal{E}}$ preserves the distance between every pair of agents.
\end{definition}
The above definition of maintenance graph rigidity is further simplified in terms of rules on the set of maintenance links $\hat{\mathcal{E}}$:
\begin{theorem}[Laman's Theorem~{\cite[Theorem 5]{anderson2003operations}}]
\label{CBRS: thm/laman}
The directed maintenance graph $\hat{\mathcal{G}} = (\mathcal{V}, \hat{\mathcal{E}})$, evaluated as an undirected framework, is said to be rigid in $\mathbb{R}^2$ if and only if there exists a subset $\hat{\mathcal{E}}' \subseteq \hat{\mathcal{E}}$ with
\begin{enumerate}
    \item $|\hat{\mathcal{E}}'| = 2|\mathcal{V}| - 3$,
    \item $|\hat{\mathcal{E}}''| \leq 2|\mathcal{V}(\hat{\mathcal{E}}'')| - 3$ for all non-empty $\hat{\mathcal{E}}'' \subseteq \hat{\mathcal{E}}'$,
\end{enumerate}
where $\mathcal{V}(\hat{\mathcal{E}}'')$ is the set of vertices incident to edges in $\hat{\mathcal{E}}''$.
\end{theorem}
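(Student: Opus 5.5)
The statement is Laman's combinatorial characterization. I would prove it for \emph{generic} positions $\boldsymbol{p}_i$, which is the setting in which it is an equivalence; for special configurations, such as collinear agents, only the counting condition remains necessary. The plan is to pass from Definition~\ref{CBRS: def/rigid_form} to infinitesimal rigidity through the rigidity matrix. For the undirected version of $\hat{\mathcal{G}}$, let $R(\boldsymbol{p}) \in \mathbb{R}^{|\hat{\mathcal{E}}| \times 2n}$ have row $(\boldsymbol{p}_i-\boldsymbol{p}_j)^{\top}$ in the block of $i$ and $(\boldsymbol{p}_j-\boldsymbol{p}_i)^{\top}$ in the block of $j$. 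By the Asimow--Roth theorem, at a generic configuration the framework is rigid if and only if $\operatorname{rank} R(\boldsymbol{p}) = 2n-3$. The three dimensions removed are the trivial motions: two translations and one rotation. So rigidity is equivalent to the existence of $2n-3$ linearly independent rows, and it remains to show that a row set $\hat{\mathcal{E}}'$ with $|\hat{\mathcal{E}}'| = 2n-3$ is generically independent exactly when condition 2 holds.

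\textbf{Necessity.} Suppose $\hat{\mathcal{E}}' \subseteq \hat{\mathcal{E}}$ gives $2n-3$ independent rows. Take any nonempty $\hat{\mathcal{E}}'' \subseteq \hat{\mathcal{E}}'$ with vertex set $\mathcal{V}(\hat{\mathcal{E}}'')$. Its rows are supported only on the $2|\mathcal{V}(\hat{\mathcal{E}}'')|$ coordinates of those vertices. They are also all orthogonal to the three trivial motions restricted to those vertices, and these three motions are independent once $|\mathcal{V}(\hat{\mathcal{E}}'')| \ge 2$. Hence the rows have rank at most $2|\mathcal{V}(\hat{\mathcal{E}}'')|-3$. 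Because the rows are independent, this gives $|\hat{\mathcal{E}}''| \le 2|\mathcal{V}(\hat{\mathcal{E}}'')|-3$, and condition 1 holds by construction.

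\textbf{Sufficiency.} Assume $\hat{\mathcal{E}}'$ satisfies conditions 1--2, so that $(\mathcal{V},\hat{\mathcal{E}}')$ is a Laman graph. I would argue by induction on $n$ using Henneberg moves. The base case $n=2$ is a single edge, which is independent at generic positions.
\begin{enumerate}
    \item \emph{Counting step.} The count $|\hat{\mathcal{E}}'| = 2n-3$ forces the average degree below $4$. Applying condition 2 to the edges of a single vertex shows every degree is at least $2$. Hence some vertex $v$ has degree $2$ or $3$.
    \item \emph{Degree $2$.} Deleting $v$ leaves a Laman graph. By induction its rows are generically independent. Re-adding $v$ at a generic point not collinear with its two neighbours adds two rows that are independent of the others, since they are the only rows supported on the block of $v$.
    \item \emph{Degree $3$.} Let the neighbours be $a,b,c$. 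I must show that for some pair, say $a,b$, the graph obtained by deleting $v$ and adding the edge $\{a,b\}$ is still Laman. This is the inverse Henneberg-II (edge-splitting) step. If all three candidate pairs failed, each would lie in a \emph{tight} set, meaning a vertex set $X$ that spans exactly $2|X|-3$ edges. Using submodularity of the count $2|X|-3-|E(X)|$, the union of two tight sets that share at least two vertices is again tight. Adding $v$ with its three edges to this union would then violate condition 2.
    \item \emph{Rank after splitting.} I then show that edge splitting preserves generic independence. Place $v$ on the line through $\boldsymbol{p}_a$ and $\boldsymbol{p}_b$. In that special position the rows of $\{v,a\}$ and $\{v,b\}$ reproduce the row of $\{a,b\}$, and the row of $\{v,c\}$ adds one further independent row, so the rank increases by exactly $2$. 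Since rank is lower semicontinuous and the generic rank is the maximum rank, the rank is also full at generic positions.
\end{enumerate}

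I expect the main obstacle to be the combinatorial core of the degree-$3$ case: the tight-set argument showing that a valid pair $\{a,b\}$ always exists. The linear-algebra parts are routine once Asimow--Roth is taken as given. Since the paper uses this result only as a cited tool, I would state it as quoted from \cite[Theorem 5]{anderson2003operations}. The remark about genericity matters later, because the iLDAG maneuvers must avoid degenerate, for example collinear, placements.
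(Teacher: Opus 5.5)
Your overall route, Asimow--Roth to reduce rigidity at a generic configuration to row independence of $R(\boldsymbol{p})$ followed by Henneberg induction for sufficiency, is the standard proof and is sound in outline. The paper gives no argument; it only quotes the result from \cite{anderson2003operations}. Your insistence on generic positions is a needed correction, because the statement as printed omits that hypothesis.

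However, the degree-$3$ step, which you yourself identify as the core, is incomplete as written. Let $i(X)$ be the number of edges of $\hat{\mathcal{E}}'$ with both ends in $X$. Let $X_{ab}, X_{bc}, X_{ca} \subseteq \mathcal{V}\setminus\{v\}$ be the tight sets blocking the three possible splits. These sets are only guaranteed to meet pairwise in a single vertex ($b$, $c$, $a$ respectively). So your lemma about unions of tight sets sharing two vertices need not apply to any pair. Two tight sets $X, Y$ with $|X\cap Y|=1$ give only $i(X\cup Y)\geq 2|X\cup Y|-4$. Adding $v$ with its three edges then gives only $i(X\cup Y\cup\{v\})\geq 2|X\cup Y\cup\{v\}|-3$, which condition 2 permits.

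The missing case needs all three sets. Suppose the pairwise intersections are exactly $\{b\}$, $\{c\}$, $\{a\}$. Then the triple intersection is empty and no edge lies in two of the sets. Hence $U = X_{ab}\cup X_{bc}\cup X_{ca}$ satisfies $|U| = |X_{ab}|+|X_{bc}|+|X_{ca}|-3$ and
\[
i(U) \;\geq\; i(X_{ab})+i(X_{bc})+i(X_{ca}) \;=\; 2|U|-3 .
\]
Then $U\cup\{v\}$ spans at least $2|U| = 2|U\cup\{v\}|-2$ edges, contradicting condition 2. Together with your case of a pair sharing two vertices, this completes the inverse Henneberg-II step.

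There are two smaller points.

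\begin{itemize}
\item The minimum-degree claim is true, but your justification does not give it: condition 2 applied to the edges at $v$ only yields $\deg v\geq 1$. Apply it instead to the edges avoiding $v$. For $n\geq 3$ this set is nonempty, because a star has at most $n-1<2n-3$ edges. These edges span at most $n-1$ vertices, so there are at most $2n-5$ of them, and therefore $\deg v\geq 2$.
\item Your aside that at special configurations the counting condition ``remains necessary'' is false for rigidity in the sense of Definition~\ref{CBRS: def/rigid_form}. A 4-cycle placed on a line with edge lengths $1,1,1,3$ is rigid, since the tight polygon inequality forces a unique shape, yet it has $4<2\cdot 4-3$ edges. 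What does hold at every configuration with distinct points is that \emph{infinitesimal} rigidity implies the count. Your necessity argument never used genericity, so it already proves exactly that.
\end{itemize}
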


In the above theorem, the directed maintenance graph $\hat{\mathcal{G}}$ is evaluated as an undirected framework, since preserving the directed edge distance trivially preserves the undirected edge distance, needed for rigidity as in Definition \ref{CBRS: def/rigid_form}. 
However, achieving rigidity under limited sensing range is feasible only if inter-agent safety is guaranteed, which is discussed next.

\subsection{Inter-Agent Collision Avoidance}
 
During rigidity acquisition, agents move to establish cross-family maintenance links, augmenting the set of time-varying edges, often bringing agents into close proximity in terms of sensing range and creating a significant risk of collision. Moreover, agents attempting to acquire a required parent under sensing range constraint are obstructed by neighboring agents, where safety potentially leads to deadlocks in the rigidity acquisition process. Therefore, we next formalize the safety requirement as follows.

\begin{definition}[Inter-Agent Collision Avoidance]
\label{CBRS: def/collision_avoidance}
The multi-agent system satisfies inter-agent collision avoidance if
\begin{equation}
    \|\boldsymbol{p}_{ij}(t)\| \geq r_{ij}, \quad \forall\, i, j \in \mathcal{V},\; i \neq j,\; \forall\, t \in \mathbb{R}_{0}^+,
    \label{CBRS: eqn/safety}
\end{equation}
where $\boldsymbol{p}_{ij}(t) = \boldsymbol{p}_{j}(t) - \boldsymbol{p}_{i}(t), r_{ij} \in \mathbb{R}^+$ is the minimum safe distance between agents $i$ and $j$.
\end{definition} 

The above setup considers the pairwise safety interaction between agent $i$ and another agent $j$. In the distributed setting, every follower agent constructs several such constraints for all agents within its local vicinity, which is presented in Subsection \ref{CBRS: ssec/IVB}. Therefore, with the rigidity conditions and inter-agent safety defined, we now formulate the main problem.

\subsection{Problem Formulation}

Given the iLDAG network $\mathcal{G}^{F}_{\mathrm{iLDAG}}$ from Definition~\ref{CBRS: def/iLDAG}, each follower has a single tree parent, contributing $n + n_l - 3$ maintenance links to $\hat{\mathcal{E}}(0)$. By Theorem~\ref{CBRS: thm/laman}, rigidity in $\mathbb{R}^2$ requires $2n - 3$ links, leaving a deficit of $n - n_l$ edges. This operation requires each follower agent to enter the sensing proximity of the requisite second parent under limited sensing range constraint. Since this acquisition occurs while agents are in motion, inter-agent safety must be enforced throughout without deadlocks during acquisition. For achieving the objectives using only local information, we assume:
\begin{assum}
    The relative position $\boldsymbol{p}_{ij}(t)$, velocity $\boldsymbol{v}_{ij}(t)$, the control input norm $\|\boldsymbol{u}_j(t)\|$, Lipschitz constants $\Gamma_{jf}, \Gamma_{jh}$ of $j^{th}$ agent with respect to $i^{th}$ agent can be measured at time $t$ if they are connected, i.e., if $(i, j) \in \mathcal{E}^{F}$.
    \label{CBRS: assum/A1}
\end{assum}
\begin{remark}
Assumption \ref{CBRS: assum/A1} is non-conservative, requiring only relative neighbor measurements within the sensing range $\lambda$, which can be obtained via onboard sensors such as LiDAR or camera, while the control input norm is exchanged among neighboring agents through local communication. 
\end{remark} 
\begin{problem}
\label{CBRS: prob/main}
Given a planar MAS modeled as~\eqref{CBRS: eqn/sys}, connected with leader-follower graph $\mathcal{G}(t) : = (\mathcal{V}, \mathcal{E}(t), \mathcal{A}(t))$ containing the subgraph $\mathcal{G}^{F}_{\mathrm{iLDAG}}$ (as in Definition~\ref{CBRS: def/iLDAG}) at time $t$, satisfying Assumption \ref{CBRS: assum/A2} and \ref{CBRS: assum/A1}, with heterogeneous sensing ranges $\lambda_i \in \mathbb{R}^+, \forall i \in \mathcal{F} $, design a distributed control law such that the following conditions are satisfied.
\begin{enumerate}
    \item[(i)] \textit{Rigidity Acquisition:} There exists a finite $t' \in \mathbb{R}^+$ such that the maintenance graph $\hat{\mathcal{G}}(t')$, as in Definition~\ref{CBRS: def/formation_graph}, is rigid (i.e., $\hat{\mathcal{G}}(t')$ satisfies the conditions of Theorem~\ref{CBRS: thm/laman});
    \item[(ii)] \textit{Safety:} Inter-agent collision avoidance is ensured as in Definition~\ref{CBRS: def/collision_avoidance}.
\end{enumerate}
\end{problem}
To address Problem~\ref{CBRS: prob/main}, we develop a hierarchical splay strategy enforced on the time-invariant spanning forest $\mathcal{G}^{F}_{\mathrm{iLDAG}}$ for sensing range based edge acquisition in multi-robot systems.
\subsubsection*{Proposed solution}
 We design a CBF-QP like controller framework structured as shown in Fig. \ref{CBRS: fig/flow_graph}, given in Section \ref{CBRS: sec/IV}. We enforce geometric structure on $\mathcal{G}^{F}_{\mathrm{iLDAG}}$, explained in Section \ref{CBRS: sec/splay}, using the nominal controller defined in Theorem \ref{CBRS: thm/hetero_splay}, in Subsection \ref{CBRS: ssec/IVA}. At the final geometric structure (splay scheme) every follower agents acquires rigidity as defined in Theorem \ref{CBRS: thm/laman} using Lemma \ref{CBRS: thm/vertex_addition} if sensing range constraints are satisfied as in Lemma \ref{CBRS: lem/rigidity_st}. The final geometric shape is ensured safe with predefined inter-agent safety margin using Lemma \ref{CBRS: lem/safety}. But we assure inter-agent safety during rigidity acquisition by improved collision cone controller barrier functions (C3BF), capable of avoiding dynamic moving agents in following Subsection using Theorem \ref{CBRS: thm/c3bf}. Finally, the C3BF-QP framework is explained in Theorem \ref{thm:safe_splay_local_cbf}.
 Accordingly, the graph $\mathcal{G}(t)$ is decomposed into the fixed iLDAG network $\mathcal{G}^{F}_{\mathrm{iLDAG}}$ and a time-varying subgraph $\mathcal{G}^{C}(t) := (\mathcal{V}, \mathcal{E}^{C}(t), \mathcal{A}^{C}(t))$, such that $\mathcal{E}(t) = \mathcal{E}^{F} \cup \mathcal{E}^{C}(t)$ and $\mathcal{A}(t) = \mathcal{A}^{C}(t) + \mathcal{A}^{F}$.
The primary objective is to construct the required edges in the time-varying subgraph $\mathcal{G}^{C}(t)$ such that it guarantees the rigidity of the entire communication graph $\mathcal{G}(t)$. 

\begin{figure}[!h]
\centering
\includegraphics[scale = 1]{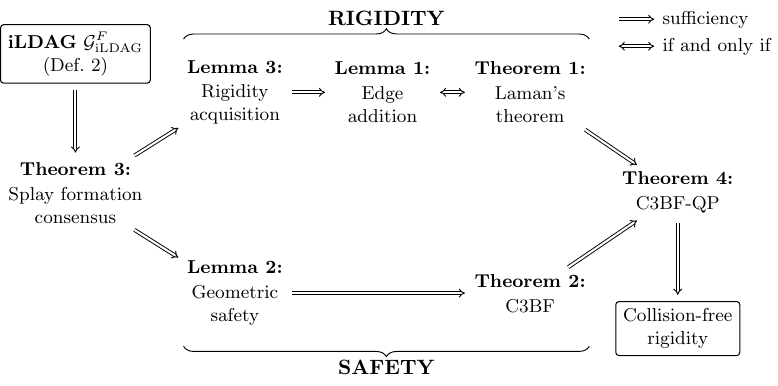}
\caption{Logical flow of the proposed solution.}
\label{CBRS: fig/flow_graph}
\end{figure}
The recursive directed star tree on $\mathcal{G}(t)$, $\forall t\in\mathbb{R}_{0}^{+}$, is imposed to ensure rigidity and safety for physical robots under limited sensing. In contrast, connectivity-only methods based on algebraic connectivity or the Fiedler eigenvalue \cite{ong2021network, yang2010decentralized} are suitable for rigidity in cases were safety is not required such as wireless sensors networks.

\subsection{Collision Cone Control Barrier Function}

To enforce the inter-agent safety condition \eqref{CBRS: eqn/safety} at all times, we employ the Collision Cone Control Barrier Function (C3BF) from~\cite{tayal2024collision}. C3BF is designed for obstacles moving with constant velocity but we extend it to accelerating obstacles by the following assumption.
\begin{assum}\label{CBRS: assum/A3}
    Given that the network $\mathcal{G}(t)$ has an iLDAG $\mathcal{G}^{F}_{\mathrm{iLDAG}}$ subgraph, with $pr(i), pr^{(2)}(i)$ as the parent and grandparent of each agent $i$ respectively, we assume that parent $pr(i)$ communicates to agent $i$, the recursively accumulated relative position $\mathcal{B}_p(i,pr(i)) = \|\boldsymbol{p}_{ipr(i)}\| + \mathcal{B}_p(pr(i), pr^{(2)}(i))$ and accumulated relative velocity $\mathcal{B}_v(i,pr(i)) = \|\boldsymbol{v}_{ipr(i)}\| + \mathcal{B}_v(pr(i), pr^{(2)}(i))$ to the agent $i$. Also, for parents that are leaders $\mathcal{B}_p(i,l) = \|\boldsymbol{p}_{l}\| + \|\boldsymbol{p}_{il}\|, \mathcal{B}_v(i,l) = \|\boldsymbol{v}_{l}\| + \|\boldsymbol{v}_{il}\|$, since they have global frame information.
\end{assum}

Moving forward, for any agent pair $(i,j)$ governed by the dynamics~\eqref{CBRS: eqn/sys}, agent $i$, we define agent $j$'s (treated as obstacle) relative position as $\boldsymbol{p}_{ij} := \boldsymbol{p}_j - \boldsymbol{p}_i$, relative velocity as $\boldsymbol{v}_{ij} := \boldsymbol{v}_j - \boldsymbol{v}_i$, and the half-angle of the collision cone as $\cos\phi_{ij} = \frac{\sqrt{\|\boldsymbol{p}_{ij}\|^2 - (r_{ij})^2}}{\|\boldsymbol{p}_{ij}\|}.$
The C3BF candidate for agent $i$, against agent $j$ as the obstacle, is defined as:
\begin{equation}
    b(\boldsymbol{p}_{ij}, \boldsymbol{v}_{ij}) := \langle \boldsymbol{p}_{ij},\, \boldsymbol{v}_{ij} \rangle - \|\boldsymbol{p}_{ij}\|\, \|\boldsymbol{v}_{ij}\|\, \cos(\pi -\phi_{ij}),
    \label{CBRS: eqn/c3bf}
\end{equation}
and the associated safe set is $\mathcal{C}_{ij} := \{(\boldsymbol{p}_{ij}, \boldsymbol{v}_{ij}) \in \mathbb{R}^2 \times \mathbb{R}^2 : b(\boldsymbol{p}_{ij}, \boldsymbol{v}_{ij}) \geq 0\}$. Similarly, the boundary set $\partial\mathcal{C}_{ij} := \{(\boldsymbol{p}_{ij}, \boldsymbol{v}_{ij}) \in \mathbb{R}^2 \times \mathbb{R}^2 : b(\boldsymbol{p}_{ij}, \boldsymbol{v}_{ij}) = 0\}$. Let us consider the system's state $\boldsymbol{x}_{ij}(t) = [\boldsymbol{p}_{ij}(t)^{\top},\ \boldsymbol{v}_{ij}(t)^{\top}]^{\top}$. 
\begin{theorem}
\label{CBRS: thm/c3bf}
Given a pair of agents $i, j \in \mathcal{V}$ governed by \eqref{CBRS: eqn/sys} satisfying Assumptions \ref{CBRS: assum/A2}- \ref{CBRS: assum/A3}, the safe set $\mathcal{C}_{ij}$, using the C3BF $b$ as in \eqref{CBRS: eqn/c3bf}, with $\frac{\partial b}{\partial \boldsymbol{x}_{ij}} \neq \boldsymbol{0}$ for all $\boldsymbol{x}_{ij} \in \partial\mathcal{C}_{ij}$, if $b(\boldsymbol{x}_{ij}(0)) \geq 0$, and the control input $\boldsymbol{u}_i = g_i(\delta_i)\boldsymbol{\xi}_{ij}\boldsymbol{\xi}_{ij}^{\top}\hat{\boldsymbol{u}}_i$, where $\boldsymbol{\xi}_{ij} = \boldsymbol{p}_{ij} + \cos{(\phi_{ij})\frac{\|\boldsymbol{p}_{ij}\|}{\|\boldsymbol{v}_{ij}\|}}\boldsymbol{v}_{ij}$ and  $\hat{\boldsymbol{u}}_i$ is computed as:
\begin{align}
    \hat{\boldsymbol{u}}_i &= \underset{\hat{\boldsymbol{q}}_i \in \mathbb{R}^2}{\arg\min} \; \|\hat{\boldsymbol{q}}_i\|^2, \nonumber \\
    \text{s.t.} &\quad L_{f_{ij}}' b(\boldsymbol{x}_{ij}) + L_{h_{ij}}' b(\boldsymbol{x}_{ij})\, \hat{\boldsymbol{q}}_i + \kappa(b(\boldsymbol{x}_{ij})) \geq 0, \nonumber \\
    & \quad \boldsymbol{\xi}_{ij}^{\top}\hat{\boldsymbol{q}}_i \leq 0,
    \label{CBRS: eqn/cbf_qp}
\end{align}
where $L_{f_{ij}}' b(\boldsymbol{x}_{ij}) = L_{f_{ij}} b   -\Gamma_{jf}\|\boldsymbol{x}_{ij}\|\|\boldsymbol{\xi}_{ij}\| - \bar{\Delta}\|\boldsymbol{\xi}_{ij}\| - \bar{\epsilon}\|\boldsymbol{u}_j\|\|\boldsymbol{\xi}_{ij}\| - \Gamma_{ih}\|\boldsymbol{u}_j\|\|\boldsymbol{x}_{ij}\|\|\boldsymbol{\xi}_{ij}\| - (\Gamma_{ih}+1)\|\boldsymbol{u}_j\|(\mathcal{B}_p(i,pr(i)) + \mathcal{B}_v(i,pr(i)))\|\boldsymbol{\xi}_{ij}\|$, $L_{h_{ij}}' b\, \hat{\boldsymbol{q}}_i = - (\Gamma_{ig}(\mathcal{B}_p(i,pr(i)) + \mathcal{B}_v(i,pr(i))))^{-1}\Gamma_i\|\boldsymbol{\xi}_{ij}\|^2\boldsymbol{\xi}_{ij}^{\top}\hat{\boldsymbol{q}}_i$, and $L_{f_{ij}} b$, $L_{h_{ij}} b$ are the Lie derivatives along \eqref{CBRS: eqn/sys}, represented as $\dot{\boldsymbol{x}}_{ij} = f_{ij}({\boldsymbol{x}_{ij}}) + h_{ij}({\boldsymbol{x}_{ij}}, \boldsymbol{u}_j)\boldsymbol{u}_i$, $\boldsymbol{x}_{ij}(t) = [\boldsymbol{p}_{ij}(t)^{\top},\ \boldsymbol{v}_{ij}(t)^{\top}]^{\top}, f_{ij} \in \mathbb{R}^{4}, h_{ij} \in \mathbb{R}^{4 \times 2}$, and $\kappa$ is an extended class-$\mathcal{K}_{e}$ function, then $\mathcal{C}_{ij}$ is forward invariant and the safety condition \eqref{CBRS: eqn/safety} holds for the pair $(i, j), \forall t \in \mathbb{R}_{0}^+$. 
\end{theorem}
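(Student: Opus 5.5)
The argument is a Nagumo-type forward-invariance argument for $\mathcal{C}_{ij}$. I will show that, along closed-loop trajectories of \eqref{CBRS: eqn/sys} under the stated input, the true derivative satisfies $\dot b(\boldsymbol{x}_{ij}) \geq L_{f_{ij}}' b + L_{h_{ij}}' b\,\hat{\boldsymbol{u}}_i$. Combined with the first QP constraint, this gives $\dot b \geq -\kappa(b)$, and the comparison lemma then yields $b(\boldsymbol{x}_{ij}(t))\geq 0$ for all $t$ whenever $b(\boldsymbol{x}_{ij}(0))\ge 0$. The regularity hypothesis $\partial b/\partial\boldsymbol{x}_{ij}\neq\boldsymbol{0}$ on $\partial\mathcal{C}_{ij}$ is exactly what makes $b$ a valid barrier, so standard CBF invariance results apply. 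Safety \eqref{CBRS: eqn/safety} is then inherited from the C3BF geometry as in \cite{tayal2024collision}. If $b\ge 0$, the relative velocity points outside the collision cone, so $\|\boldsymbol{p}_{ij}\|$ cannot drop to $r_{ij}$: on approach to $\|\boldsymbol{p}_{ij}\|=r_{ij}$ we have $\phi_{ij}\to\pi/2$, and $b\ge0$ forces $\langle\boldsymbol{p}_{ij},\boldsymbol{v}_{ij}\rangle\ge 0$, i.e.\ non-approaching motion.

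\textbf{Computing $\dot b$.} Write $\dot{\boldsymbol{v}}_{ij} = f_{v_j}(\boldsymbol{p}_j,\boldsymbol{v}_j)-f_{v_i}(\boldsymbol{p}_i,\boldsymbol{v}_i) + h_{v_j}(\boldsymbol{p}_j,\boldsymbol{v}_j)\boldsymbol{u}_j - h_{v_i}(\boldsymbol{p}_i,\boldsymbol{v}_i)\boldsymbol{u}_i$. A direct computation shows $\partial b/\partial \boldsymbol{v}_{ij} = \boldsymbol{\xi}_{ij}$: differentiate $\langle\boldsymbol p,\boldsymbol v\rangle+\|\boldsymbol p\|\|\boldsymbol v\|\cos\phi$ in $\boldsymbol v$ to get $\boldsymbol p+\cos\phi\,\|\boldsymbol p\|\boldsymbol v/\|\boldsymbol v\|$. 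Hence $\dot b = (\partial b/\partial\boldsymbol{p}_{ij})^\top\boldsymbol{v}_{ij} + \boldsymbol{\xi}_{ij}^\top\dot{\boldsymbol{v}}_{ij}$.

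\textbf{Drift terms.} I split the drift as $f_{v_j}(\boldsymbol p_j,\boldsymbol v_j)-f_{v_i}(\boldsymbol p_i,\boldsymbol v_i) = [f_{v_j}(\boldsymbol p_j,\boldsymbol v_j)-f_{v_j}(\boldsymbol p_i,\boldsymbol v_i)] + [f_{v_j}(\boldsymbol p_i,\boldsymbol v_i)-f_{v_i}(\boldsymbol p_i,\boldsymbol v_i)]$. The first bracket is bounded by $\Gamma_{jf}\|\boldsymbol{x}_{ij}\|$ via Lipschitz continuity, and the second by $\bar\Delta$ via Assumption~\ref{CBRS: assum/A4}. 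Similarly, $h_{v_j}(\cdot_j)\boldsymbol u_j$ is split into $h_{v_i}(\cdot_i)\boldsymbol u_j$ plus two error terms bounded by $\bar\epsilon\|\boldsymbol u_j\|$ and $\Gamma_{jh}\|\boldsymbol{x}_{ij}\|\|\boldsymbol u_j\|$; I expect this to match the $\Gamma_{ih}$ term in the statement after relabeling. The remaining term $h_{v_i}(\boldsymbol p_i,\boldsymbol v_i)\boldsymbol u_j$ depends on the unknown global state of agent $i$. I bound it by $\|h_{v_i}(\boldsymbol p_i,\boldsymbol v_i)\|\le \|I_2\|+\Gamma_{ih}(\|\boldsymbol p_i\|+\|\boldsymbol v_i\|)$ using $h_{v_i}(\boldsymbol 0,\boldsymbol 0,0)=I_2$, and then replace $\|\boldsymbol p_i\|+\|\boldsymbol v_i\|$ by the triangle-inequality chain $\mathcal B_p(i,pr(i))+\mathcal B_v(i,pr(i))$ from Assumption~\ref{CBRS: assum/A3}. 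This chain telescopes up to a leader's global state. After Cauchy--Schwarz against $\boldsymbol\xi_{ij}$, these bounds reproduce exactly the subtracted terms in $L'_{f_{ij}}b$. The part of $\dot b$ that remains, independent of the states of $j$, is the nominal $L_{f_{ij}}b$.

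\textbf{Input term (the main obstacle).} The input contribution is $-\boldsymbol\xi_{ij}^\top h_{v_i}\boldsymbol u_i = -\boldsymbol\xi_{ij}^\top h_{v_i}g_i\boldsymbol\xi_{ij}\,\boldsymbol\xi_{ij}^\top\hat{\boldsymbol u}_i$. By the second QP constraint, $\boldsymbol\xi_{ij}^\top\hat{\boldsymbol u}_i\le0$, so this term is nonnegative. It therefore suffices to lower-bound the positive scalar $\boldsymbol\xi_{ij}^\top h_{v_i}g_i\boldsymbol\xi_{ij}$ by $\mu_{\min}(h_{v_i}g_i)\|\boldsymbol\xi_{ij}\|^2$, where $h_{v_i}g_i$ is symmetric positive definite by Assumption~\ref{CBRS: assum/A2}. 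For a $2\times2$ SPD matrix, $\mu_{\min}=\det/\mu_{\max}\ge \Gamma_i/(\Gamma_{ig}(\|\boldsymbol p_i\|+\|\boldsymbol v_i\|))$. Replacing the unknown norm by the larger accumulated bound $\mathcal B_p+\mathcal B_v$ only decreases this lower bound, so $-\boldsymbol\xi^\top h g\boldsymbol\xi\,\boldsymbol\xi^\top\hat{\boldsymbol u}\ge L'_{h_{ij}}b\,\hat{\boldsymbol u}_i$. This sign bookkeeping, together with the eigenvalue-via-determinant trick, is the delicate step. I also need to check two technical points: feasibility of the QP (take $\hat{\boldsymbol q}_i=-c\boldsymbol\xi_{ij}$ with $c$ large, valid whenever $\boldsymbol\xi_{ij}\ne\boldsymbol 0$), and that $\|\boldsymbol v_{ij}\|=0$ is handled by the regularity hypothesis. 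Summing the drift and input bounds gives $\dot b\ge L'_{f_{ij}}b+L'_{h_{ij}}b\,\hat{\boldsymbol u}_i\ge-\kappa(b)$, which concludes the argument.
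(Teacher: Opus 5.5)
Your proposal follows the paper's proof essentially step for step. It uses the same telescoping of $\dot{\boldsymbol{v}}_j-\dot{\boldsymbol{v}}_i$; if you use the paper's order (first swapping $h_{v_j}\to h_{v_i}$ at $(\boldsymbol{p}_j,\boldsymbol{v}_j)$), you get $\Gamma_{ih}$ directly with no relabeling. It also uses the same replacement of $\|\boldsymbol{p}_i\|+\|\boldsymbol{v}_i\|$ by $\mathcal{B}_p(i,pr(i))+\mathcal{B}_v(i,pr(i))$ via Assumption~\ref{CBRS: assum/A3}, the same bound $\mu_{\min}(h_{v_i}g_i)\ge\Gamma_i/(\Gamma_{ig}(\mathcal{B}_p+\mathcal{B}_v))$, and the same appeal to the C3BF invariance result of \cite{tayal2024collision}. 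Your explicit use of $\boldsymbol{\xi}_{ij}^{\top}\hat{\boldsymbol{u}}_i\le 0$ for the sign, and your $\det/\mu_{\max}$ argument, fill in steps the paper leaves implicit.

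One caveat: your bound $\|h_{v_i}\|\le 1+\Gamma_{ih}(\mathcal{B}_p+\mathcal{B}_v)$ is dominated by the statement's coefficient $(\Gamma_{ih}+1)(\mathcal{B}_p+\mathcal{B}_v)$ only when $\mathcal{B}_p+\mathcal{B}_v\ge 1$, and only if the $\delta_i$-dependence of $h_{v_i}$ is ignored. So the match is not ``exact'', though the paper's proof makes the same unstated step.
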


\begin{proof}
For Proof refer to Appendix \ref{CBRS: apx/A}.
\end{proof}
\begin{remark}
Although restricting the search space to $\hat{\boldsymbol{u}}_i \in \hat{U}_i := \{\boldsymbol{u} \in \mathbb{R}^2 : \boldsymbol{\xi}_{ij}^{\top}\boldsymbol{u} \leq 0\}$ limits the admissible control directions, it is not inherently conservative. Since $\frac{\partial b}{\partial \boldsymbol{v}_i} = -\boldsymbol{\xi}_{ij}$, the constraint defines a state-dependent half-space aligned with the direction of steepest increase in safety, ensuring that the control input does not reduce the certified safety margin.
\end{remark}

\section{Splay Scheme in iLDAG Network}
\label{CBRS: sec/splay}
For solving Problem \ref{CBRS: prob/main} across an arbitrary number of agents under sensing range constraints, we propose an iLDAG geometry on the graph $\mathcal{G}^{F}_{\mathrm{iLDAG}}$ such that it acquires the requisite maintenance links while strictly satisfying the inter-agent safety.
\begin{figure}[!h]
\centering
\includegraphics[scale = 0.8]{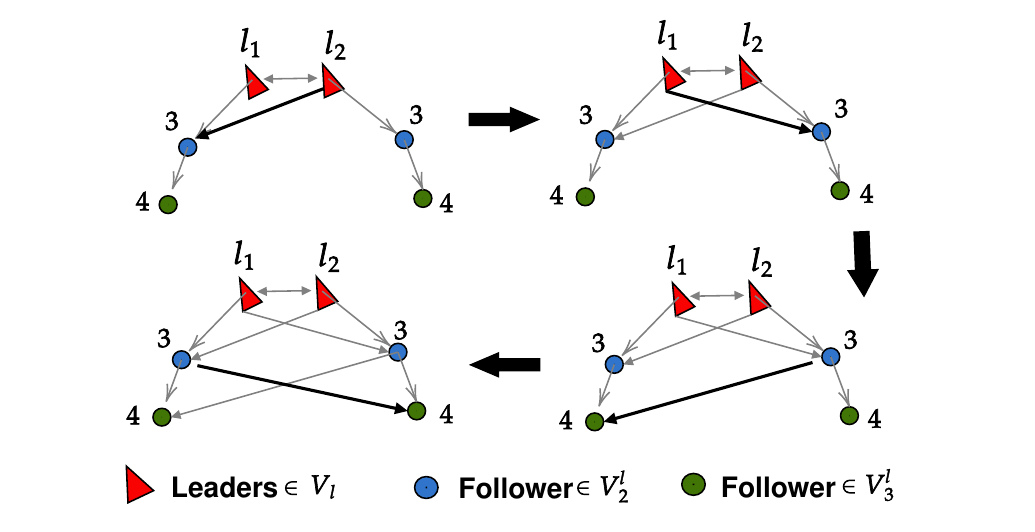}
\caption{Illustration of edge addition-based rigidity construction for the maintenance graph.}
\label{CBRS: fig/hen_graph}
\end{figure}
Although Theorem \ref{CBRS: thm/laman} provides a characterization of rigidity in terms of edge counts and subgraph constraints, for implementation using iLDAG geometry, a constructive method that builds a rigid maintenance graph step by step is more convenient. A standard approach is the Henneberg vertex addition method~\cite{anderson2003operations}, where starting from a single edge, the graph is grown by adding one vertex at a time, with each new vertex connects to exactly two existing vertices, preserving rigidity at every step. For our setting, with a fixed number of agents, we describe the procedure in terms of edge addition.
\begin{lemma}[Edge Addition-Based Rigidity Construction]
\label{CBRS: thm/vertex_addition}
Given a rigid maintenance graph $\hat{\mathcal{G}}_m = (\mathcal{V}_m, \hat{\mathcal{E}}_m)$ in $\mathbb{R}^2$ as discussed in Theorem \ref{CBRS: thm/laman}, consider a follower $i \in V_s^{(l)}$ as per the index ordering under the iLDAG structure. If agent $i$ admits exactly one tree parent $j_1 = {pr}(i) \in V_{s-1}^{(l)}$ as guaranteed by the iLDAG construction, and acquires one additional cross-family maintenance link $(i,j_2)$ with $j_2 \in V_{s-1}^{(l')}, j_2 \notin \mathcal{S}_{j_1}$, where $l' = l$ or $l' = (l \bmod n_l) + 1$, selected according to Definition~\ref{CBRS: def/formation_graph}, then the resulting graph $\hat{\mathcal{G}}_{m+1} =
(\mathcal{V}_m \cup \{i\},\ \hat{\mathcal{E}}_m \cup \{(i,j_1),(i,j_2)\})$ is rigid.
\end{lemma}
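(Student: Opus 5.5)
This is a Henneberg type-I vertex addition, so the plan is to verify the Laman counts of Theorem~\ref{CBRS: thm/laman} directly for $\hat{\mathcal{G}}_{m+1}$.

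First, I would check that the two new edges are well defined. The edges must join $i$ to two distinct vertices, both already in $\mathcal{V}_m$, with $i \notin \mathcal{V}_m$.
\begin{itemize}
\item By the iBFI linear extension in Definition~\ref{CBRS: def/iLDAG}, $j_1 = pr(i)$ lies in layer $s-1$ and has a smaller index than $i$. Hence it is already in $\mathcal{V}_m$ when vertices are added in index order.
\item Definition~\ref{CBRS: def/formation_graph}(ii)--(iii) selects $j_2$ from layer $V_{s-1}^{(l')}$, which is indexed before every layer-$s$ agent. So $j_2 \in \mathcal{V}_m$ as well.
\item Since $j_2 \notin \mathcal{S}_{j_1}$ and $j_2 \neq j_1$, the two endpoints are distinct. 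In case (ii), $j_2$ is the sibling of $j_1$ with the next rank. In case (iii), $j_2$ is either in another tree or $j_1$ has rank $k$. In both cases $j_2 \neq j_1$.
\item Therefore the new edges are not parallel, and $(i,j_1)$, $(i,j_2)$ are two distinct edges not in $\hat{\mathcal{E}}_m$.
\end{itemize}
In the degenerate case $n_l$ cyclic with $l'=l$, the first-ranked sibling could coincide with $j_1$. I would rule this out by noting that rule (iii) applies to the adjacent tree, and by invoking $n_l \ge 2$.

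Next, I would apply Theorem~\ref{CBRS: thm/laman} to $\hat{\mathcal{G}}_m$. This gives a Laman subset $\hat{\mathcal{E}}'_m$ with $|\hat{\mathcal{E}}'_m| = 2|\mathcal{V}_m|-3$. Set $\hat{\mathcal{E}}' := \hat{\mathcal{E}}'_m \cup \{(i,j_1),(i,j_2)\}$. Then
\[
|\hat{\mathcal{E}}'| = 2|\mathcal{V}_m| - 1 = 2|\mathcal{V}_m \cup \{i\}| - 3,
\]
so condition 1 holds.

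For condition 2, take any nonempty $F \subseteq \hat{\mathcal{E}}'$ and let $F_0 := F \cap \hat{\mathcal{E}}'_m$.
\begin{itemize}
\item If $F$ contains no new edge, the bound follows from the Laman property of $\hat{\mathcal{E}}'_m$.
\item If $F$ contains exactly one new edge and $F_0 = \emptyset$, then $|F| = 1 \le 2\cdot 2 - 3$.
\item If $F$ contains exactly one new edge and $F_0 \neq \emptyset$, then $\mathcal{V}(F)$ contains $i$ in addition to the vertices of $F_0$. Hence $|F| = |F_0| + 1 \le 2|\mathcal{V}(F_0)| - 3 + 1 \le 2|\mathcal{V}(F)| - 3 - 1$.
\item If $F$ contains both new edges and $F_0 = \emptyset$, then $F$ has 2 edges on 3 vertices, and $2 \le 3$.
\item If $F$ contains both new edges and $F_0 \neq \emptyset$, then $|\mathcal{V}(F)| \ge |\mathcal{V}(F_0)| + 1$, because $i$ is new. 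So $|F| = |F_0| + 2 \le 2|\mathcal{V}(F_0)| - 1 \le 2|\mathcal{V}(F)| - 3$.
\end{itemize}
Thus $\hat{\mathcal{E}}'$ satisfies both conditions, and Theorem~\ref{CBRS: thm/laman} gives the rigidity of $\hat{\mathcal{G}}_{m+1}$, read as an undirected framework.

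The combinatorial counting is routine. The main obstacle is the first step: showing from the iBFI ordering and the rank rules of Definition~\ref{CBRS: def/formation_graph} that $j_2$ always exists, is already placed, and differs from $j_1$. This is delicate at the boundary cases, namely rank $k$ parents that wrap to the adjacent tree, and layer-2 agents whose parents are leaders with $\sigma_0(l)=k$. For those cases I would argue explicitly that the leaders form the rigid base graph $\hat{\mathcal{G}}_{n_l}$, using rule (3) of Definition~\ref{CBRS: def/iLDAG}. The induction then starts from this base and adds followers in index order, with the first-ranked layer-$(s-1)$ agent of tree $\mathcal{T}_{l'}$ already present.
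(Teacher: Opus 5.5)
\textbf{Verdict.} Your counting argument is correct and follows the paper's route: adjoin $(i,j_1),(i,j_2)$ to a Laman subset of $\hat{\mathcal{E}}_m$ and check both conditions of Theorem~\ref{CBRS: thm/laman}. It is in fact more complete than the paper's proof. The paper checks only the total count $2(n+1)-3$ and the single triplet subset ($2\le 3$). Condition 2, however, quantifies over all nonempty subsets. Your cases with $F_0\neq\emptyset$, which use $i\notin\mathcal{V}(F_0)$, are what actually close that condition.

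\textbf{A false auxiliary claim.} You assert that $V_{s-1}^{(l')}$ is indexed before every layer-$s$ agent, so $j_2\in\mathcal{V}_m$ when vertices are added in index order. The paper's worked indexing is different: leaders come first, then each tree is filled completely in turn (Example~\ref{ex:interleaved_iLDAG} and the simulation section). Under that indexing your claim fails for rule~(iii) links with $l<n_l$. For example, agent $8\in V_3^{(1)}$ has parent $5$ of rank $k=3$, so its second parent is $j_2=15\in V_2^{(2)}$, whose index exceeds $8$. Your closing remark that the first-ranked layer-$(s-1)$ agent of $\mathcal{T}_{l'}$ is ``already present'' is therefore wrong in index order.

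This does not break the lemma itself. The lemma presupposes $j_1,j_2\in\mathcal{V}_m$ for $\hat{\mathcal{G}}_{m+1}$ to be a graph at all, and your Laman check is valid under that presupposition.

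\textbf{How to fix the global induction.} For the induction from the leaders' rigid base, add followers layer by layer across all trees: first all of $\bigcup_l V_2^{(l)}$, then all of $\bigcup_l V_3^{(l)}$, and so on, rather than in index order. Every maintenance link of a layer-$s$ agent ends in layer $s-1$, so both endpoints are then already present. Rule~2 of Definition~\ref{CBRS: def/iLDAG} makes layers $1,\dots,d-1$ full, so the required $j_2$ exists.
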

\begin{proof}
For Proof refer to Appendix \ref{CBRS: apx/B}.
\end{proof}
By the edge-addition sequence of Lemma~\ref{CBRS: thm/vertex_addition}, once the maintenance graph $\hat{\mathcal{G}}(t')$ becomes rigid at finite time $t'$, the communication graph $\mathcal{G}(t)$ is also rigid, since Theorem~\ref{CBRS: thm/laman} requires only a rigid subset of edges, provided by the maintenance graph. Accordingly, each follower augments its tree edge $(i,j_1)$ with a cross-family edge $(i,j_2)$ following Definition \ref{CBRS: def/formation_graph}, sequentially constructing the required deficit edges while preserving rigidity (cf. Fig.~\ref{CBRS: fig/hen_graph}).

During this process, agents may become trapped while simultaneously maintaining safety and acquiring rigidity due to the persistent presence of neighboring agents. To prevent such deadlocks in the rigid graph, we introduce a \emph{splay scheme} over the recursive directed-star spanning forest $\mathcal{G}^{F}_{\mathrm{iLDAG}}$, assigning each agent a unique target so it does not remain unnecessarily in another agent's vicinity, thereby ensuring safe and uninterrupted rigidity acquisition.
\begin{definition}[Splay scheme over iLDAG]
\label{CBRS: def/splay_form} 
Consider the iLDAG graph $\mathcal{G}^{F}_{\mathrm{iLDAG}}$ as defined in Definition~\ref{CBRS: def/iLDAG}. 
The agents are said to achieve a \textit{splay scheme} if for each tree $\mathcal{T}_l$ in the spanning forest $\mathcal{G}^{F}$ and for every agent $j \in V_s^{(l)}$, $s \in \lb 1, d-1 \rb_{\mathbb{N}}$, the children in $\mathcal{S}_j$ are positioned uniformly on a circle of radius $\eta_s \in \mathbb{R}^+$ centered at $j$. Also, all the leaders $l \in \mathcal{V}_{L}$ are positioned uniformly on a circle of radius $\eta_0$, centered at the origin of the global frame.
\end{definition}
Given the sibling rank $\sigma_j(i)\in\lb1,k\rb_{\mathbb{N}}$ of $(s+1)$-layer agent $i\in\mathcal{S}_j$, its splay equilibrium position relative to parent $j$ is
\begin{equation}
\boldsymbol{p}_i(t)=\boldsymbol{p}_j(t)+\eta_s
\begin{bmatrix}
\cos\!\left(\frac{2\pi(\sigma_j(i)-1)}{k}+\theta_j(t)\right)\\
\sin\!\left(\frac{2\pi(\sigma_j(i)-1)}{k}+\theta_j(t)\right)
\end{bmatrix},
\label{CBRS: eqn/splay}
\end{equation}
where $k$ is the branching factor in $\mathcal{G}^{F}_{\mathrm{iLDAG}}$, and $\theta_j(t)$ denotes the orientation of agent $j$'s local frame with respect to the global frame. This construction is recursively applied across all tree layers, with each agent serving as the splay center for its own children.

To guarantee cross-family link acquisition within the sensing range and construct the rigid maintenance graph $\hat{\mathcal{G}}(t)$ (Definition~\ref{CBRS: def/formation_graph}), the orbital geometry is bounded. Moreover, to ensure inter-agent safety with margin $r^a:=\max_{(i,j)}r_{ij}$ while maximizing agent accommodation, the orbit radius $\eta_s$ is reduced progressively across layers, as formulated below.
\begin{definition}[Geometric Orbit Decay]
The orbit radius $\eta_s$ for the agents in layer $V_s^{(l)}, \forall l \in \mathcal{V}_{L},$ is said to satisfy \emph{Geometric Orbit Decay} if $\eta_{s} = \eta_0 \alpha^{s}, \quad \forall s \in \lb 1, d-1 \rb_{\mathbb N},$
where $\alpha \in (0,1)$ is the radial decay rate, and $\alpha^s$ denotes the $s$-th power of $\alpha$.
\end{definition}
Since leader agents have no parents, we define their parent as the virtual zero node $pr(l)=0$ and denote the set of all leaders by $\mathcal{S}_0 := \mathcal{V}_L$ for notational consistency with follower families. Using the geometric decay structure on the orbit radius $\eta_s$, the safety constraints are formulated next for the iLDAG network.
\begin{lemma}[Geometric Safety Constraints]
\label{CBRS: lem/safety}
Given the splay scheme as in Definition \ref{CBRS: def/splay_form}, if, for any agent $i \in V_s^{(l)}$, the orbit radius $\eta_{s-1}$  of the parent $pr(i) (\in V_{s-1}^{(l)})$ and the decay rate $\alpha \in (0, \sin(\frac{\pi}{k}))$, obey the following bounds $\forall s \in \lb 1, d \rb_{\mathbb N}, \forall l \in \mathcal{V}_{L}$:
\begin{align}
    &\eta_{s-1} > \max \Big\{r^a, \frac{r^a}{2\sin(\pi/k)}, \frac{r^a}{2\big(\frac{1}{\alpha}\sin\left(\frac{\pi}{k}\right) - 1\big)}\Big\}, \nonumber \\
    &\eta_0 >\frac{r^a}{2\big( \sin(\frac{\pi}{n_l}) - \alpha_c\big)}, \alpha_c = \frac{\alpha - \alpha^d}{1 - \alpha},
    \label{eq:safety_bounds}
\end{align}
then the inter-agent safety as in Definition \ref{CBRS: def/collision_avoidance}, for the iLDAG spanning forest $\mathcal{G}^{F}_{\mathrm{iLDAG}}$, is satisfied with the safety margin $r^a = \max_{(i,j)} r_{ij}$, for any $i, j \in \mathcal{V}$.
\end{lemma}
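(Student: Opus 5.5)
We prove the lemma by showing that every pair of distinct agents in the splay equilibrium \eqref{CBRS: eqn/splay} is at least $r^a$ apart. We classify each pair by its lowest common ancestor in $\mathcal{G}^{F}_{\mathrm{iLDAG}}$ and use one geometric fact throughout. The subtree hanging below an agent $i\in V_s^{(l)}$ is contained in the disk $\mathbb{B}(\boldsymbol{p}_i,R_s)$, where
\[
R_s=\sum_{m=s}^{d-1}\eta_m=\eta_0\,\frac{\alpha^s-\alpha^d}{1-\alpha}.
\]
This follows by induction down the tree, since each child lies at distance $\eta_m$ from its parent. In particular $R_1=\eta_0\alpha_c$, and $R_s\le \eta_{s-1}\alpha/(1-\alpha)$.

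\emph{Case 1: parent--child pairs.} Here $\|\boldsymbol{p}_{ij}\|=\eta_{s-1}>r^a$ by the first term of the max. If the statement's $s=1$ case is read as the leader circle, this case also covers the leaders.

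\emph{Case 2: siblings.} Two children of the same parent lie on a circle of radius $\eta_{s-1}$ at an angular separation of at least $2\pi/k$. Their chord length is therefore at least $2\eta_{s-1}\sin(\pi/k)$. This exceeds $r^a$ by the second term of the max. For leaders the chord is $2\eta_0\sin(\pi/n_l)\ge 2\eta_0(\sin(\pi/n_l)-\alpha_c)>r^a$.

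\emph{Case 3: descendants of distinct siblings.} Let $a,b\in\mathcal{S}_j$ be distinct siblings in layer $s$, and let $x,y$ lie in the subtrees of $a$ and $b$ respectively (one of them may be $a$ or $b$ itself). By the triangle inequality,
\[
\|\boldsymbol{p}_x-\boldsymbol{p}_y\|\ge 2\eta_{s-1}\sin(\pi/k)-2R_s .
\]
The plan is to show that the third term of the max makes the right-hand side exceed $r^a$. The natural comparison is $R_s\le\eta_s=\alpha\eta_{s-1}$. With that comparison the bound becomes $2\eta_{s-1}\big(\sin(\pi/k)-\alpha\big)>r^a$, which rearranges to $\eta_{s-1}>\frac{r^a}{2(\frac1\alpha\sin(\pi/k)-1)}$ after replacing $\eta_{s-1}$ by $\eta_s$ on the appropriate side. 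The hypothesis $\alpha<\sin(\pi/k)$ guarantees the denominator is positive.

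For two leaders' subtrees the same argument applies with the subtree radius $R_1=\eta_0\alpha_c$. The bound becomes $2\eta_0\sin(\pi/n_l)-2\eta_0\alpha_c>r^a$, which is exactly the stated bound on $\eta_0$.

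\emph{Case 4: ancestor versus deeper descendant.} Suppose $x$ is a descendant of $i$ but not a child. Write $x$ as lying in the subtree of some child $c$ of $i$. Then
\[
\|\boldsymbol{p}_x-\boldsymbol{p}_i\|\ge \eta_s-R_{s+1},
\]
and one checks this is positive and exceeds $r^a$ using $\alpha<\sin(\pi/k)\le 1$ together with the bounds already imposed. Every pair of distinct agents falls into exactly one of these four cases, so this closes the proof.

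\textbf{Main obstacle.} The difficult step is Case 3, specifically reconciling the tail sum $R_s$ with the single-step bound. The exact geometric tail is $\eta_s(1-\alpha^{d-s})/(1-\alpha)$, which is larger than $\eta_s$. So the stated third term of the max matches only a one-layer tail, that is, children compared only with cousins one level down.

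I would first try to prove Case 3 layer by layer. For two agents $x,y$ in the same layer with distinct ancestors, the angular separation of their branches compounds: the relative orientation $\theta_j$ is free, but the children of each node are spread over an arc. If that fails, I would instead strengthen the bound using the $\alpha_c$-type tail, as is done for the leaders, and note that the leader-level condition involving $\alpha_c$ already dominates the deeper layers. This works because $\eta_{s-1}$ scales by $\alpha^{s-1}$ while $r^a$ is fixed, and the per-layer max condition is imposed for every $s$.
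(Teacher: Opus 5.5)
Your Cases 1 and 2 are sound and match the paper's proof: the parent--child distance is $\eta_{s-1}$ and the sibling chord is $2\eta_{s-1}\sin(\pi/k)$. The cross-tree part of Case 3 is also sound and matches the paper, using the full tail $R_1=\eta_0\alpha_c$ between leaders' subtrees. Your lowest-common-ancestor classification is more exhaustive bookkeeping than the paper's.

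The gap is in the within-tree part of Case 3 and in Case 4. Case 3 rests on $R_s\le\eta_s$, which is false whenever $s\le d-2$. The ``one checks'' in Case 4 fails once $\alpha>1/2$, which the hypothesis $\alpha<\sin(\pi/k)$ allows for $k\le5$. Neither fallback rescues this. The leader bound involves $\eta_0$ and $\sin(\pi/n_l)$, so it says nothing about $\eta_{s-1}=\alpha^{s-1}\eta_0$, whose margin shrinks with depth. No angular-compounding argument can work either, because violations occur even with all frames aligned. In fact the statement is false as written:
\emph{Second cousins.} Take $k=6$, $n_l=2$, $d=4$, $\alpha=0.4$, $r^a=1$, $\eta_0=50$, so $(\eta_1,\eta_2,\eta_3)=(20,8,3.2)$. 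The max in the first bound is $2$ and the leader bound is $\eta_0>1.33$, so all hypotheses hold. Set $\theta_j\equiv0$ and let $a,b$ be the first two children of a leader, with $\|\boldsymbol{p}_a-\boldsymbol{p}_b\|=20$. The rank-$3$ child of $a$ and the rank-$6$ child of $b$ point at each other along the segment. The rank-$4$ child of the former and the rank-$5$ child of the latter are then $20-16-3.2=0.8<r^a$ apart.
\emph{Leader--grandchild.} Take $k=2$, $n_l=2$, $d=3$, $\alpha=0.55$, $r^a=1$, $\eta_0=3.5$. Every bound holds: the max is $1<\eta_2\approx1.059$, and $\alpha_c=\alpha(1+\alpha)=0.8525$ gives the requirement $\eta_0>3.39$. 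Yet the rank-$1$ child of a leader's rank-$2$ child sits at distance $\eta_1-\eta_2\approx0.87<r^a$ from the leader.

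The paper's proof does not get around this; it simply never examines these pairs. Its within-tree step only compares children $i\in\mathcal{S}_{j_1}$, $m\in\mathcal{S}_{j_2}$ of two siblings, with intrusion $2\eta_{s-1}$. That is exactly the third term of the max. It uses the full geometric tail only across trees, and second cousins and ancestor--grandchild pairs are never treated. So your ``Main obstacle'' diagnosis is correct, but the proposal should not claim to close the proof.

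A true version of the lemma needs two changes. First, replace the third term by your own Case 3 bound $2\eta_{s-1}\sin(\pi/k)-2\sum_{m=s}^{d-1}\eta_m>r^a$. Second, add $\eta_s-\sum_{m=s+1}^{d-1}\eta_m>r^a$; this is automatic when $\alpha\le1/2$, since the left side is then at least $\eta_{d-1}>r^a$. Under those hypotheses your four-case argument is a complete proof.
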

\vspace{2mm}
\begin{proof}
For Proof refer to Appendix \ref{CBRS: apx/C}.
\end{proof}
To illustrate the geometric conditions for inter-agent safety in the splay scheme, we next present the procedure for deriving $\alpha$ satisfying safety with given $\eta_0$ through the following example.
\begin{example}
    Consider the iLDAG subgraph $\mathcal{G}^{F}_{\mathrm{iLDAG}}$ with the parameters as $\eta_0 = 25, r^a = 2, k=6, n_l = 4, d=3$. For $k=6$, $\max\{r^a, \frac{r^a}{2\sin(\pi/k)}\} = 2$. Let $s=3$ and derive the bounds for the first inequality in \eqref{eq:safety_bounds}: $\eta_2 = 25\alpha^2 > 2 \Rightarrow \alpha > \sqrt{\frac{2}{25}} = 0.28, 
        \eta_2 = 25\alpha^2 > \frac{2}{\frac{1}{\alpha} - 2} \Rightarrow 25\alpha - 50\alpha^2 - 2 > 0
        \Rightarrow 0.1 < \alpha <0.4, $
    consequently, for the second inequality in \eqref{eq:safety_bounds}: $\eta_0 = 25 > \frac{1}{0.707 - \alpha_c} \Rightarrow \alpha^2 + \alpha  -0.6671 < 0
        \Rightarrow 0 < \alpha < 0.4416.$
    Similarly, for $s=1,2$, the inequalities result in $\alpha > \max\{0.08, 0.28, 0.1\} = 0.28$ and $\alpha < \min\{0.48, 0.46, 0.4, 0.4416\} = 0.4$. So, the maximal decay rate $\alpha = 0.4$ ensures inter-agent safety in the splay scheme. \uqed
\end{example}
Apart from inter-agent safety, the splay scheme should achieve network rigidity as well. The required constraints on sensing range $\lambda$ are discussed next.
\begin{lemma}[Rigidity Acquisition under Splay]
\label{CBRS: lem/rigidity_st}
Given the splay scheme as in Definition \ref{CBRS: def/splay_form}, with decay rate $\alpha$ satisfying Lemma \ref{CBRS: lem/safety} and if every follower $i \in V_s^{(l)} (s>1)$ under the iLDAG tree parent $j_1 \in V_{s-1}^{(l)}$, acquires a cross-family maintenance link to a second parent $j_2 \in V_{s-1}^{(l)}$, satisfying the condition (ii) of maintenance graph Definition \ref{CBRS: def/formation_graph} (same tree cross-family link) under the sensing range $\lambda_{i} \geq\eta_{s-1}$ condition:
\begin{equation}
    \lambda_i \geq \eta_{s-2}\left(2\sin\left(\frac{\pi}{k}\right) + \alpha\right),
    \label{eq:rigidity_st}
\end{equation}
or to a second parent $j_2 \in V_{s-1}^{(l')}$ (where trees $(l,l')$ are adjacent), satisfying the condition (iii) of maintenance graph Definition \ref{CBRS: def/formation_graph} (adjacent tree cross-family link) under the sensing range $\lambda_{i}$ condition: $\lambda_i \geq 2\eta_0\sin(\frac{\pi}{n_l}) + 2\sum_{r=1}^{s-2}\eta_r + \eta_{s-1},$ then the maintenance graph is safe (as in Definition \ref{CBRS: def/collision_avoidance}) and rigid as discussed in Theorem \ref{CBRS: thm/laman}.
\end{lemma}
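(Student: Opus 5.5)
The proof has two parts. Safety comes directly from Lemma~\ref{CBRS: lem/safety}: since $\alpha$ satisfies those bounds, the splay configuration of Definition~\ref{CBRS: def/splay_form} keeps every pair of agents at distance at least $r^a$, so the maintenance graph drawn on this configuration is safe in the sense of Definition~\ref{CBRS: def/collision_avoidance}. Rigidity will follow from Lemma~\ref{CBRS: thm/vertex_addition}, applied along the iBFI order. The base case is the leader subgraph, which is rigid by rule (3) of Definition~\ref{CBRS: def/iLDAG}. For each follower $i$ in increasing index, I must show that its designated second parent $j_2$ from Definition~\ref{CBRS: def/formation_graph} lies in $\mathbb{B}(\boldsymbol{p}_i,\lambda_i)$. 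The tree parent $j_1$ is always sensed, because $\|\boldsymbol{p}_{ij_1}\|=\eta_{s-1}\le\lambda_i$. The interleaved ordering guarantees that $j_2$, which lies in layer $s-1$, has a smaller index than $i$ and is therefore already in $\hat{\mathcal{G}}_m$. Induction then gives a rigid graph with $2n-3$ edges, and Theorem~\ref{CBRS: thm/laman} finishes the argument.

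\emph{Case (ii), same tree.} Here $j_1$ and $j_2$ are siblings of consecutive rank under the grandparent $g=pr^{(2)}(i)$, both on the circle of radius $\eta_{s-2}$ centred at $g$. Their angular separation is $2\pi/k$, so by the chord formula $\|\boldsymbol{p}_{j_1}-\boldsymbol{p}_{j_2}\|=2\eta_{s-2}\sin(\pi/k)$. The triangle inequality gives
\begin{equation*}
\|\boldsymbol{p}_i-\boldsymbol{p}_{j_2}\|\le \|\boldsymbol{p}_i-\boldsymbol{p}_{j_1}\|+\|\boldsymbol{p}_{j_1}-\boldsymbol{p}_{j_2}\| = \eta_{s-1}+2\eta_{s-2}\sin(\pi/k).
\end{equation*}
Since $\eta_{s-1}=\alpha\eta_{s-2}$, this bound equals $\eta_{s-2}\bigl(2\sin(\pi/k)+\alpha\bigr)\le\lambda_i$, which is exactly \eqref{eq:rigidity_st}.

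\emph{Case (iii), adjacent tree.} Now $j_2$ is a first-ranked layer-$(s-1)$ agent of tree $\mathcal{T}_{l'}$. I bound the distance by walking a path through the forest: from $i$ up to $j_1$ (length $\eta_{s-1}$), then up the ancestor chain of $\mathcal{T}_l$ to leader $l$ (orbit lengths $\eta_{s-2},\dots,\eta_1$), across to leader $l'$ (adjacent on the leaders' circle, so the chord is $2\eta_0\sin(\pi/n_l)$), and finally down $\mathcal{T}_{l'}$ to $j_2$ (orbit lengths $\eta_1,\dots,\eta_{s-2}$). Summing with the triangle inequality gives
\begin{equation*}
2\eta_0\sin(\pi/n_l)+2\sum_{r=1}^{s-2}\eta_r+\eta_{s-1},
\end{equation*}
which is bounded by $\lambda_i$ under the stated hypothesis, so the link is sensed.

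\emph{Main obstacle.} The delicate step is verifying the hypotheses of Lemma~\ref{CBRS: thm/vertex_addition}. First, $j_2\neq j_1$: in case (ii) the rank $\sigma+1$ differs from $\sigma$, and in case (iii) $j_2$ lies in a different tree. Second, $j_2$ must be added before $i$ in the linear extension order, which is where the interleaved indexing is needed. Third, the genericity (non-collinearity) of $\boldsymbol{p}_i,\boldsymbol{p}_{j_1},\boldsymbol{p}_{j_2}$ must hold, so that the Henneberg vertex addition preserves rigidity of the framework. I would argue non-collinearity from the splay angles: the direction from $j_1$ to $j_2$ is fixed by $g$'s orbit geometry, whereas $i$ sits at a generic angle $\theta_{j_1}$ on $j_1$'s orbit. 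Collinearity would therefore be a measure-zero coincidence, excluded by a choice of the frame offsets $\theta_j$.
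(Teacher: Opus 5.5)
Your core argument is the paper's. Safety is inherited from Lemma~\ref{CBRS: lem/safety}. The two sensing bounds come from the same triangle inequalities: the chord $2\eta_{s-2}\sin(\pi/k)$ plus $\eta_{s-1}=\alpha\eta_{s-2}$ in case (ii), and the path $i\to j_1\to\cdots\to l\to l'\to\cdots\to j_2$ in case (iii). Rigidity follows from Lemma~\ref{CBRS: thm/vertex_addition}, starting from the rigid leader subgraph.

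However, one step you state explicitly is false. Under the paper's iBFI the trees are filled one after another, so a case-(iii) agent can have a \emph{smaller} index than its designated second parent. In Example~\ref{ex:interleaved_iLDAG}, agent $8\in\mathcal{S}_5$ has $\sigma_1(5)=k=3$, so its second parent is the first-ranked agent $15\in V_2^{(2)}$, and $8<15$. Likewise, in the simulation agent $9$ has $j_2=12$. Processing followers in increasing index therefore breaks your induction, because $j_2\notin\mathcal{V}_m$ when $i$ is added.

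The repair is to perform the vertex additions layer by layer across all trees: leaders first, then $\bigcup_{l}V_2^{(l)}$, then $\bigcup_{l}V_3^{(l)}$, and so on. Under rules (i)--(iii) of Definition~\ref{CBRS: def/formation_graph}, both maintenance links of a layer-$s$ agent end in layer $s-1$, of the same or the adjacent tree. So $j_1$ and $j_2$ are already present when $i$ is added, and Lemma~\ref{CBRS: thm/vertex_addition} needs nothing more than that. The paper's proof is silent on this point, so you were right to raise it, but the justification has to be the layer order, not the index.

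Your third concern, non-collinearity, is unnecessary, and your argument for it does not hold. Theorem~\ref{CBRS: thm/laman}, to which the statement appeals, is purely combinatorial. Even for Definition~\ref{CBRS: def/rigid_form}, a vertex joined by two bars to two distinct vertices of an already rigid framework is confined to the intersection of two circles. That is at most two points, so the vertex cannot move continuously; collinearity only destroys infinitesimal rigidity.

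Moreover, $\theta_j$ is the agent's frame orientation, not an offset you are free to choose. With $k=6$ and aligned frames ($\theta_{j_1}=\theta_{pr(j_1)}$), the rank-$3$ child of a rank-$1$ agent $j_1$ sits exactly on the segment from $j_1$ to its rank-$2$ sibling $j_2$. So collinearity is not a measure-zero accident there.
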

\begin{proof}
For Proof refer to Appendix \ref{CBRS: apx/D}.
\end{proof} 
Next we consider the constraints on the branching factor $k$ with the following example.
\begin{table}[h]
\centering
\caption{Safety and sensing constraints for different branching factors}
\label{CBRS: tab/k_constraints}
\begin{tabular}{|c|c|c|}
\hline
\textbf{Branching Factor $k$} & \textbf{Safety Constraint} & \textbf{Sensing Constraint} \\
\hline
5 & $r^a < 3.75$ & $\lambda_i > 15.76$ \\
\hline
6 & $r^a < 2$ & $\lambda_i > 14$ \\
\hline
7 & $r^a < 0.67$ & $\lambda_i > 12.68$ \\
\hline
\end{tabular}
\end{table}
\begin{example}
    Consider the iLDAG spanning forest $\mathcal{G}^{F}_{\mathrm{iLDAG}}$ with the parameters as $\eta_0 = 25, n_l = 4, d=3, \alpha = 0.4$. We need to determine the required branching factor $k$ which maximizes feasible safety margin $r^a$ and minimizes smallest feasible sensing range $\lambda_i$ of agent $i$. For $s=3$, we have bounds for \eqref{eq:safety_bounds}:
    $\eta_2 = 4 > \frac{r^a}{2\sin (\frac{\pi}{k})} \Rightarrow \sin(\frac{\pi}{k}) > \frac{r^a}{8}, r^a < 4, \sin(\frac{\pi}{k}) > \alpha\big(\frac{r^a}{8} + 1\big) = \frac{r^a}{20} + 0.4,$
    and for \eqref{eq:rigidity_st}, $\sin(\frac{\pi}{k}) \leq \frac{\lambda_i - 4}{20}$. We do not consider cross-tree sensing and safety constraints, as they are unaffected by $k$. Now, we can observe the trade-off between safety and sensing requirement for branching factor $k$ in Table \ref{CBRS: tab/k_constraints}. Here $k = 6$ for $s=3$, would be the an appropriate choice. Similarly, together with the bounds for $s=1,s=2$, we can conclude the required branching factor. \uqed
\end{example}
\section{Controller Design for Collision-free Rigidity Acquisition}
\label{CBRS: sec/IV}
We need to achieve the second-order formation consensus (on both position and velocity) for tracking the splay scheme.
\begin{definition}[Formation Consensus over iLDAG]
\label{CBRS: def/star_consensus}
Consider the splay scheme defined in Definition~\ref{CBRS: def/splay_form} for $\mathcal{G}^{F}_{\mathrm{iLDAG}}$. 
The agents are said to achieve \textit{second-order formation consensus at every local star $\mathcal{S}_j$} if, for each agent $j \in V_s^{(l)}$, $s \in \lb 1, d-1 \rb_{\mathbb{N}}$, and for all children $i \in \mathcal{S}_j$, the following holds, $\lim_{t \to \infty} \left\| \boldsymbol{p}_i(t) - \boldsymbol{p}_j(t) - \boldsymbol{p}_{ij}^*(t) \right\| = 0, 
\lim_{t \to \infty} \left\| {\boldsymbol{v}}_i(t) - {\boldsymbol{v}}_j(t) \right\| = 0,$
where $\boldsymbol{p}_{ij}^*(t)$ denotes the desired relative position defined by the splay scheme, i.e.,
\begin{equation}\label{eqn: desiredpt} 
\boldsymbol{p}_{ij}^*(t) = \eta_s \begin{bmatrix}
\cos\left( \frac{2\pi (\sigma_j(i)-1)}{k} + \theta_j(t) \right) \\
\sin\left( \frac{2\pi (\sigma_j(i)-1)}{k} + \theta_j(t) \right)
\end{bmatrix}.
\end{equation}
\end{definition}

Also, in the splay scheme, enforcing only child-to-parent orientation constraints with respect to any child $i( \in \mathcal{S}_j)$'s frame, as $\boldsymbol{p}_{ij} = \boldsymbol{p}_{ij}^*$, allows their common parent $j$ to act as a local alignment target. The orientation of the agents in star $\mathcal{S}_j$, for agent $j$'s frame, is still undetermined. This can lead to degenerate global configurations (like all agents $i \in \mathcal{S}_j$ align at $0^{\circ}$ for agent $j$'s local frame) where agents in star $\mathcal{S}_j$ do not align uniformly around the parent $j$ despite satisfying pairwise constraints. Introducing bidirectional (additional reverse) constraints ensures that $\boldsymbol{p}_{ij} = \boldsymbol{p}_{ij}^*, \boldsymbol{p}_{ji} = -\boldsymbol{p}_{ij}^*$, thereby enforcing internal uniformity within each star for parent's frame and eliminating asymmetric dependencies between parent and children. The additional reverse edges, opposite to linear extension ordering, supplement the iLDAG architecture, also breaking the acyclic structure. To gain further insight, consider a local star centered at agent $j \in \mathcal{V}$ with child set $\mathcal{S}_j = \{i_1, i_2, \dots, i_k\}$. We denote the spanning forest $\mathcal{G}^{F}_{\mathrm{iLDAG}}$ with reverse edges as $\mathcal{G}^{F}_r := (\mathcal{V}, \mathcal{E}^{F}_r, \mathcal{A}^{F}_r)$, where $\mathcal{E}^{F}_r := \bigcup_{j \in \mathcal{V}} \{(j,i): i \in \mathcal{S}_j\} \cup \mathcal{E}^{F}$ and $\mathcal{A}^{F}_r$ is the corresponding adjacency matrix. The Laplacian matrix of the graph $\mathcal{G}_{r}^{F}$, $\mathcal{L}_{r}^{F}:= \mathcal{D}_r^{F} - \mathcal{A}_r^{F}$, where $\mathcal{D}_r^{F}$ is the degree matrix of $\mathcal{G}_{r}^{F}$.\\
Let us define the induced local star subgraph for $\mathcal{G}^{F}_r$ with reverse edges as $\mathcal{G}_{r_j}^{F} := (\mathcal{V}_{r_j}^F, \mathcal{E}_{r_j}^{F}, \mathcal{A}_{r_j}^{F}),$
where $\mathcal{V}_{r_j}^{F} = \{j\} \cup \mathcal{S}_j \cup \{pr(j)\}, \mathcal{E}_{r_j}^{F} = \mathcal{E}^{F}_r \cap (\mathcal{V}_{r_j}^{F} \times \mathcal{V}_{r_j}^{F})$, and $\mathcal{A}^{F}_{r_j}$ is the corresponding adjacency matrix.  
The constant Laplacian $\mathcal{L}_{r_j}^{F}$ of the graph $\mathcal{G}_{r_j}^{F}$ has the following form.
\begin{align*}
\mathcal{L}_{r_j}^{F} =
\begin{bmatrix}
\sum\limits_{i \in \mathcal{S}_j \cup \{pr(j)\}} a_{ji} & -a_{j i_1} & \cdots & -a_{j i_k} & -a_{jpr(j)}\\
-a_{i_1 j} & a_{i_1 j} & \cdots & 0 & 0\\
\vdots & \vdots & \ddots & \vdots & \vdots\\
-a_{i_k j} & 0 & \cdots & a_{i_k j} & 0\\
-a_{pr(j)j}&0&\cdots&0&a_{pr(j)j}
\end{bmatrix}.
\end{align*}


Since, in general, $a_{ji} \neq a_{ij}$ due to heterogeneous sensing and directed communication, the Laplacian $\mathcal{L}_{r_j}^{F}$ becomes non-symmetric. 
The Laplacian is no longer guaranteed to be positive semi-definite or diagonalizable with real eigenvalues (also highlighted in \cite[Remark 2]{wang2025scalable}). 


So, with complex Laplace eigenvalues for each local star, we next establish the controller to implement the splay scheme. 

\subsection{Hierarchical Formation Consensus for Splay Scheme}
\label{CBRS: ssec/IVA}

The construction proceeds hierarchically by recursively enforcing the local splay configuration at each star, as specified in Definition~\ref{CBRS: def/splay_form}. This recursive enforcement ensures that each parent–child group attains a consistent circular arrangement, which is propagated across successive layers of the iLDAG. 
Note that due to the presence of bidirectional edges between parent-child, i.e.,  $(i,j), (j,i)$, as per Assumption \ref{CBRS: assum/A3}, agent $i$ has access to both $\mathcal{B}_p(i,j), \mathcal{B}_v(i,j)$ and $\mathcal{B}_p(j,i), \mathcal{B}_v(j,i)$. 
Induction of the local splay property for globally consistent splay scheme over the spanning forest with reverse edges $\mathcal{G}^{F}_r$ is discussed next.
\begin{theorem}[Splay Formation Consensus]
\label{CBRS: thm/hetero_splay}

Consider the heterogeneous multi-agent system, modeled as \eqref{CBRS: eqn/sys}, satisfying Assumption \ref{CBRS: assum/A2} - \ref{CBRS: assum/A3}, connected under iLDAG subgraph $\mathcal{G}^{F}_r$ in $\mathcal{G}(t)$, with edge weight as the ratio of sensing ranges, i.e., $a_{ij} = \tau \frac{\lambda_i}{\lambda_j}, \forall i,j \in \mathcal{V}, \tau \in \mathbb{R}^{+}$. Let each follower agent $i$ implement the control law defined as
\begin{align}
\label{eq: Thm4_nom}
\boldsymbol{u}_i =  
& -k_{p} \sum_{j \in \mathcal{S}_i \cup \{pr(i)\} } a_{ij}
\left(\boldsymbol{p}_i - \boldsymbol{p}_j - \boldsymbol{p}_{ij}^*\right) \nonumber \\
& -k_{v} \sum_{j \in \mathcal{S}_i \cup \{pr(i)\}} a_{ij}
\left(\boldsymbol{v}_i - \boldsymbol{v}_j\right) ,
\end{align}
with $\boldsymbol{p}_{ij}^*$ given by the splay scheme in Definition~\ref{CBRS: def/star_consensus}.
If the gain ratio satisfies
\begin{equation}
\frac{k_{v}^2}{k_{p}} > 
\frac{\big(\operatorname{Im}(\mu_{z}(\mathcal{L}^{F}_r))\big)^2}
{\operatorname{Re}(\mu_{z}(\mathcal{L}^{F}_r))|\mu_{z}|^2}, \forall z \in [1, n]_{\mathbb{N}}, \ \text{s.t.} \ \mu_z \neq 0,
\label{eq: gains}
\end{equation}
where $\mu_{z}(\mathcal{L}^{F}_r)$ are complex eigenvalues for $\mathcal{G}^{F}_r$'s graph Laplacian matrix $\mathcal{L}^{F}_r \in \mathbb{R}^{n \times n}$, and under bounded initial error $\|\boldsymbol{x}_{im}(0)\| = \|[\boldsymbol{p}_{im}^{\top}(0), \boldsymbol{v}_{im}^{\top}(0)]^{\top}\| < M, M \in \mathbb{R}^{+}$, $\forall t \in \mathbb{R}^+_0$,
\begin{align}
    \mu^* = \min_{\forall \mu_z, \mu_z \neq 0} \frac{k_v}{2}\operatorname{Re}(\mu_{z}(\mathcal{L}^{F}_r)) > C\gamma^* \geq C \gamma(t),\label{eq: compn}
\end{align}
where $\gamma(t) = \max_i(\max_{(i,j)} \gamma_{ij}(t)), {\gamma}_{ij}(t) = \Gamma_{if} + \sum_{m \in \mathcal{S}_i \cup \{pr(i)\}} a_{im}(\Gamma_{mh}\|\boldsymbol{x}_{im}(t)\|+ 2\Gamma_{mh}(\mathcal{B}_p(m,i) + \mathcal{B}_v(m,i)) + \bar{\epsilon})(k_p + k_v)$, $C = \frac{\lambda_{\max}}{\lambda_{\min}}, \lambda_{\max} = \max_z \lambda_z, \lambda_{\min} = \min_z \lambda_z$, $\Gamma_{if}, \Gamma_{ih}, \Gamma_{jh}$ are Lipschitz constants for functions $f_{v_i}, h_{v_i}, h_{v_j}$, respectively. Then each local star achieves second-order formation consensus with an ultimate bound on tracking error as $
\|\boldsymbol{y}\| = \sum_{j \in \mathcal{S}_i \cup \{pr(i)\}} a_{ij} (\|\boldsymbol{p}_i - \boldsymbol{p}_j - \boldsymbol{p}_{ij}^*\| + 
\|\boldsymbol{v}_i - \boldsymbol{v}_j\|) \leq \big(\frac{C\Delta}{\mu^{*} - C\gamma^*}\big) + \varepsilon,$
where $\varepsilon \in \mathbb{R}^+, \Delta = \max_i (\max_{(i,j)} \Gamma_{if} \|\boldsymbol{p}_{ij}^*\| + \bar{\Delta})$, for all $t \geq T(\varepsilon)$, where $T(\varepsilon) := \frac{1}{\mu^{*} - C\gamma^*}  \ln\!\left( \frac{C\|\boldsymbol{y}(0)\| - \frac{C\Delta}{\mu^{*} - C\gamma^*}}{\varepsilon} \right)$. 
\end{theorem}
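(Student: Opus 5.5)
We rewrite the closed loop in error coordinates and treat it as a nominal linear second-order consensus system perturbed by bounded-growth nonlinear terms. For each follower $i$ and each neighbour $j\in\mathcal{S}_i\cup\{pr(i)\}$ in $\mathcal{G}^F_r$, define the formation error $\tilde{\boldsymbol p}_{ij}=\boldsymbol p_i-\boldsymbol p_j-\boldsymbol p_{ij}^*$ and $\tilde{\boldsymbol v}_{ij}=\boldsymbol v_i-\boldsymbol v_j$. Stack the weighted local sums into $\boldsymbol y$ (the quantity in the statement). These are exactly the rows of $(\mathcal{L}^F_r\otimes I_2)$ applied to the stacked errors $\boldsymbol e_p,\boldsymbol e_v$.

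Substituting \eqref{eq: Thm4_nom} into \eqref{CBRS: eqn/sys} and using Assumption~\ref{CBRS: assum/A2} (through $g_i$), we split $h_{v_i}\boldsymbol u_i$ as $\boldsymbol u_i+(h_{v_i}-I)\boldsymbol u_i$, using $h_{v_i}(\boldsymbol 0,\boldsymbol 0,0)=I_2$. The error dynamics then take the form
\[
\dot{\boldsymbol e}_p=\boldsymbol e_v,\qquad
\dot{\boldsymbol e}_v=-(\mathcal{L}^F_r\otimes I_2)(k_p\boldsymbol e_p+k_v\boldsymbol e_v)+\boldsymbol\psi(t),
\]
where $\boldsymbol\psi$ collects three groups of terms:
\begin{itemize}
\item the drift mismatches $f_{v_i}-f_{v_j}$, bounded via Lipschitzness by $\Gamma_{if}(\|\boldsymbol x_{ij}\|+\|\boldsymbol p^*_{ij}\|)$ plus $\bar\Delta$ from Assumption~\ref{CBRS: assum/A4};
\item the input-gain deviations $h_{v_m}-I$, bounded by $\Gamma_{mh}$ times the agent states;
\item the heterogeneity mismatch $\bar\epsilon$.
\end{itemize}
The agent states are not known globally, so we replace them with the accumulated relative quantities $\mathcal{B}_p,\mathcal{B}_v$ of Assumption~\ref{CBRS: assum/A3}, using the triangle inequality along the tree path to the leader. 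This gives $\|\boldsymbol\psi\|\le \gamma(t)\|\boldsymbol y\|+\Delta$, with $\gamma$ and $\Delta$ exactly as defined in the statement. The factors $(k_p+k_v)a_{im}$ come from bounding $\|\boldsymbol u_m\|$ by the local error sum.

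Next we analyse the nominal part. We bring $\mathcal{L}^F_r$ to Jordan/Schur form, $\mathcal{L}^F_r=P J P^{-1}$, so that the dynamics decouple into scalar complex modes. Each nonzero eigenvalue $\mu_z$ gives the characteristic polynomial $s^2+k_v\mu_z s+k_p\mu_z=0$. Applying the complex Routh–Hurwitz (Hermite–Biehler) condition to this polynomial yields Hurwitz stability iff $\operatorname{Re}\mu_z>0$ and $k_v^2/k_p>\operatorname{Im}(\mu_z)^2/(\operatorname{Re}(\mu_z)|\mu_z|^2)$, which is \eqref{eq: gains}. The eigenvalues of $\mathcal{L}^F_r$ have nonnegative real parts by Geršgorin. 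Because the graph is a directed spanning forest with bidirectional parent–child edges and the leaders act as roots, the zero eigenvalues correspond only to the leader/rigid-motion modes, and those are removed by working in error coordinates.

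We then choose a decay rate of $\mu^*=\min_z \tfrac{k_v}{2}\operatorname{Re}\mu_z$ for the transformed modes, obtained by a quadratic Lyapunov function per mode or directly from the root locations. The transformation $P$ is applied only to a matrix whose entries are ratios $\tau\lambda_i/\lambda_j$. We therefore bound its conditioning, $\|P\|\|P^{-1}\|$, by $C=\lambda_{\max}/\lambda_{\min}$; this is done by a diagonal similarity $\mathrm{diag}(\lambda_i)$, which symmetrises the edge-weight ratios.

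Finally, we apply the comparison lemma. With $V=\|P^{-1}\boldsymbol y\|$ (or the norm of the modal states), we get
\[
\dot V\le-\mu^*V+C(\gamma^*V+\Delta),
\]
so $V(t)\le e^{-(\mu^*-C\gamma^*)t}V(0)+\frac{C\Delta}{\mu^*-C\gamma^*}$ provided $\mu^*>C\gamma^*$, which is \eqref{eq: compn}. Transforming back to $\boldsymbol y$ costs another factor $C$. Solving $e^{-(\mu^*-C\gamma^*)t}(C\|\boldsymbol y(0)\|-\tfrac{C\Delta}{\mu^*-C\gamma^*})\le\varepsilon$ then gives $T(\varepsilon)$.

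The main obstacle is the circularity in $\gamma(t)$: it depends on $\|\boldsymbol x_{im}(t)\|$ and on $\mathcal{B}_p,\mathcal{B}_v$, which are themselves states. We close this with an invariance argument. Using the bounded initial error $M$, we show that the sublevel set $\{V\le \max(V(0),C\Delta/(\mu^*-C\gamma^*))\}$ is forward invariant, so that $\gamma(t)\le\gamma^*$ holds on it. This requires choosing $\gamma^*$ from $M$ and verifying self-consistency, and I expect this step, together with the conditioning bound by $C$ for a non-normal Laplacian, to need the most care.
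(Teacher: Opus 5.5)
Your route is the paper's: the same stacked error $\boldsymbol y$, the same Lipschitz and $\mathcal B_p,\mathcal B_v$ bound $\|\boldsymbol d\|\le\gamma^*\|\boldsymbol y\|+\Delta$, the complex Hurwitz test behind \eqref{eq: gains}, the diagonal similarity $T=\mathrm{diag}(\lambda_1,\dots,\lambda_n)$ as the source of $C$, and a comparison/Gronwall step. Your remark that the zero modes drop out in error coordinates is more careful than the paper. The paper applies $\|e^{(\hat{\mathcal L}\otimes I_2)t}\|\le Ce^{-\mu^* t}$ on the whole space, even though $\hat{\mathcal L}$ has a double zero eigenvalue. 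To make your remark rigorous, note that $\boldsymbol e_v$, $\boldsymbol e_p$ (for antisymmetric offsets $\boldsymbol p^*_{ji}=-\boldsymbol p^*_{ij}$) and the perturbation all lie in $\operatorname{range}(\mathcal L^F_r\otimes I_2)$.

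\textbf{The step that fails.} It is the nominal estimate: decay at rate $\mu^*=\tfrac{k_v}{2}\min_z\operatorname{Re}\mu_z$ with transient constant $C$. The paper's proof makes the same claim, citing \cite{yu2010some}. Because $T^{-1}\mathcal L^F_rT$ is symmetric (as you and the paper both use), the $\mu_z$ are real, and each modal block reads $\dot z_1=z_2$, $\dot z_2=-k_p\mu_z z_1-k_v\mu_z z_2$. For the Euclidean norm of the modal states, $\tfrac{d}{dt}\|z\|^2=2(1-k_p\mu_z)z_1z_2-2k_v\mu_z z_2^2$, which vanishes whenever $z_2=0$. So $\dot V\le-\mu^*V$ is false. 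With equal sensing ranges ($C=1$), the claimed bound $\|e^{(\hat{\mathcal L}\otimes I_2)t}\|\le e^{-\mu^* t}$ on the invariant subspace already fails for small $t>0$.

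Reading the rate off the roots of $s^2+k_v\mu_z s+k_p\mu_z$ fails too. Their real part is $-\tfrac{k_v}{2}\mu_z$ only if $k_v^2\mu_z\le 4k_p$. Otherwise the slow root $\tfrac12\big(-k_v\mu_z+\sqrt{k_v^2\mu_z^2-4k_p\mu_z}\big)$ has modulus strictly below $\tfrac{k_v}{2}\mu_z$ and behaves like $k_p/k_v$ for large $k_v$. So raising $k_v$ inflates $\mu^*$ linearly while actually slowing those modes down.

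\textbf{What a correct version gives.} Use a per-mode quadratic Lyapunov function, or equivalently the eigenvector matrix of each $2\times 2$ block. This gives $\|e^{(\hat{\mathcal L}\otimes I_2)t}\boldsymbol y\|\le CKe^{-\rho t}\|\boldsymbol y\|$ on the invariant subspace, where:
\begin{itemize}
\item $\rho\le\mu^*$ is the true slowest modal rate, reduced by an arbitrarily small margin at critical damping;
\item $K>1$ is the worst condition number of the modal transformations. By the computation above, $K=1$ is impossible for any $\rho>0$.
\end{itemize}
Your comparison step then yields $\|\boldsymbol y(t)\|\le CKe^{-(\rho-CK\gamma^*)t}\|\boldsymbol y(0)\|+\tfrac{CK\Delta}{\rho-CK\gamma^*}$ under $\rho>CK\gamma^*$. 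This has the same structure, but not the constants $\mu^*$ and $C$ of the statement, which your argument does not establish.

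\textbf{The invariance argument.} The step you flag as the main obstacle is unnecessary. \eqref{eq: compn} already assumes $\gamma^*\ge\gamma(t)$ for all $t$, and the paper's proof uses it exactly that way. It also could not be closed from a sublevel set of $V$. The quantities $\mathcal B_p,\mathcal B_v$ contain the leaders' absolute $\|\boldsymbol p_l\|,\|\boldsymbol v_l\|$, which $\boldsymbol y$ does not control.
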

\begin{proof}
For Proof refer to Appendix \ref{CBRS: apx/E}
\end{proof}
\begin{remark}
    With $\mu^{*} \gg C\gamma^*$ using parameter $k_v$, we can make the worst-case tracking error arbitrarily small. Also, for system dynamics such as double-integrator, differential-drive, with $f_{v_i} = \boldsymbol{0}, \Delta = 0$, the worst-case tracking error, $\frac{C\Delta}{\mu^{*} - C\gamma^*} + \varepsilon = \varepsilon$.
\end{remark}

\begin{remark}[On bounding $\gamma(t)$]
The requisite condition \eqref{eq: compn} is not conservative: with the initial error bounded, $\|\boldsymbol{y}(0)\|\le R_0$ and $\gamma(0)\le R_0'$, $R_0, R_0'\in\mathbb{R}^+$, the invariance condition $\mu^{*}>C\gamma^{*}$ makes the set $\mathbb{B}(\boldsymbol{0}, R)$, with $R=CR_0+\tfrac{C\Delta}{\mu^{*}-C\gamma^{*}}$, forward invariant, so that $\|\boldsymbol{y}(t)\|\le R$ for all $t\in\mathbb{R}_0^{+}$. The admissible $R_0$, that is, the region of attraction, is fixed by $\mu^{*}>C\gamma^{*}$ with $\gamma^{*}=R_0'+\gamma_1 R$, where $\gamma_1\propto\Gamma_{ih}$, $\gamma_1\in\mathbb{R}^+$, and $\Gamma_{ih}$ is the Lipschitz constant of the input-gain function $h_{v_i}$. 
Hence a larger sensing-range spread $C=\lambda_{\max}/\lambda_{\min}$ or a larger $\Gamma_{ih}$ raises $C\gamma^{*}$ and shrinks $R_0$ for a given threshold $R$.
\end{remark}


\subsection{C3BF-QP Formulation for Collision-free Splay Scheme}
\label{CBRS: ssec/IVB}

For each agent $i \in \mathcal{V}$, the collision avoidance constraints are constructed only with respect to agents in its local vicinity, denoted as the set $\tilde{\mathcal{N}}_i(t) := \{m \in \mathcal{V} : \|\boldsymbol{p}_{im}\| \leq \tilde{\lambda}_i < \lambda_i\}$. Local vicinity radius $\tilde{\lambda}_i \in \mathbb{R}^{+}$ is chosen such that at splay equilibrium, the set $\tilde{\mathcal{N}}_i(t)$ contains only the required two parents.
Accordingly, the control barrier functions $b(\boldsymbol{x}_{ij})$ (as in \eqref{CBRS: eqn/c3bf}) are defined solely for agents $j \in \tilde{\mathcal{N}}_i(t)$, such that the resulting collision cones are locally induced and do not require global pairwise evaluation. This ensures scalability of the safety filter with respect to the iLDAG topology. The controller framework for collision-free rigidity acquisition is as follows.
\begin{theorem}[Safe splay scheme - C3BF-QP Control]
\label{thm:safe_splay_local_cbf}

Consider the heterogeneous multi-agent system \eqref{CBRS: eqn/sys} with limited sensing range $\lambda_i, \forall i \in \mathcal{V}$, evolving over an iLDAG with reverse edges $\mathcal{G}^{F}_r$ in $\mathcal{G}(t)$, satisfying Lemma \ref{CBRS: lem/safety} and \ref{CBRS: lem/rigidity_st}, and Assumption \ref{CBRS: assum/A2} - \ref{CBRS: assum/A3}. With $\boldsymbol{u}_i^{nom}$ as the nominal controller defined in \eqref{eq: Thm4_nom}, satisfying conditions of Theorem \ref{CBRS: thm/hetero_splay} and the control input $\boldsymbol{u}_i = g_i(\delta_i)\boldsymbol{\xi}_{ij}\boldsymbol{\xi}_{ij}^{\top}\hat{\boldsymbol{u}}_i$, where $\boldsymbol{\xi}_{ij} = \boldsymbol{p}_{ij} + \cos{\phi_{ij}\frac{\|\boldsymbol{p}_{ij}\|}{\|\boldsymbol{v}_{ij}\|}}\boldsymbol{v}_{ij}$, defined on $ \boldsymbol{x}_{ij} = [\boldsymbol{p}_{ij}^{\top}, \boldsymbol{v}_{ij}^{\top}]^{\top}$ following Theorem \ref{CBRS: thm/c3bf} is computed:
\begin{align}
\hat{\boldsymbol{u}}_i &= \arg\min_{\hat{\boldsymbol{q}}_i\in \mathbb{R}^2}  \|(g_i(\delta_i)\boldsymbol{\xi}_{ij}\boldsymbol{\xi}_{ij}^{\top})\hat{\boldsymbol{q}}_i - \boldsymbol{u}_i^{nom}\|^2, \nonumber \\
\text{s.t.}\ L_{f_{ij}}' &b(\boldsymbol{x}_{ij}) + L_{h_{ij}}' b(\boldsymbol{x}_{ij})\hat{\boldsymbol{q}}_i + \kappa (b(\boldsymbol{x}_{ij})) \geq 0, \forall j \in \tilde{\mathcal{N}}_i(t), \nonumber \\
& \boldsymbol{\xi}_{ij}^{\top}\hat{\boldsymbol{q}}_i \leq 0, \forall j \in \tilde{\mathcal{N}}_i(t),
\end{align}
under the condition that $b(\boldsymbol{x}_{ij}(0)) \geq 0$ for all $j \in \tilde{\mathcal{N}}_i(t)$, $\frac{\partial b}{\partial \boldsymbol{x}_{ij}} \neq \boldsymbol{0}$ on $\partial \mathcal{C}_{ij}$. Then, all pairwise safety sets $\mathcal{C}_{ij}$ are forward invariant, and the closed-loop system achieves safe second-order formation consensus for all agents $i \in \mathcal{F}$ when $t \geq T(\varepsilon)$, where $T(\varepsilon)$ is as defined in Theorem \ref{CBRS: thm/hetero_splay}. This solves Problem \ref{CBRS: prob/main}.
\end{theorem}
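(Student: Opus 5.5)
The proof splits into two parts: safety, and consensus/rigidity. Both reduce to earlier results, and the only new work is showing the multi-constraint QP does not break them.

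\textbf{Safety.} For every $j\in\tilde{\mathcal{N}}_i(t)$, any feasible $\hat{\boldsymbol{q}}_i$ satisfies the same two inequalities that appear in the single-pair QP of Theorem~\ref{CBRS: thm/c3bf}. The proof of that theorem (Appendix~\ref{CBRS: apx/A}) uses only two things: that $\hat{\boldsymbol{u}}_i$ satisfies the constraints, not that it minimizes $\|\hat{\boldsymbol{q}}_i\|^2$; and that $\dot b(\boldsymbol{x}_{ij})\ge L_{f_{ij}}'b+L_{h_{ij}}'b\,\hat{\boldsymbol{q}}_i$ under Assumptions~\ref{CBRS: assum/A2}--\ref{CBRS: assum/A3}. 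So for each pair we still get $\dot b\ge-\kappa(b)$. Since $b(\boldsymbol{x}_{ij}(0))\ge0$ and $\partial b/\partial\boldsymbol{x}_{ij}\neq\boldsymbol{0}$ on the boundary, Nagumo's theorem gives forward invariance of each $\mathcal{C}_{ij}$, and hence $\|\boldsymbol{p}_{ij}\|\ge r_{ij}$.

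Two bookkeeping points remain for safety.
\begin{itemize}
\item \emph{Agents outside the vicinity.} An agent $m\notin\tilde{\mathcal{N}}_i(t)$ is at distance greater than $\tilde\lambda_i\ge r^a$, so it is trivially safe. When it enters the vicinity, the barrier is activated with $b\ge0$ at activation. I would argue this by choosing $\tilde\lambda_i$ large enough that the collision-cone condition holds at entry; this is a gap I would flag.
\item \emph{Feasibility of the QP.} The choice $\hat{\boldsymbol{q}}_i=-c\,\boldsymbol{\xi}_{ij}$ makes $L_{h_{ij}}'b\,\hat{\boldsymbol{q}}_i=c\,\Gamma_i\|\boldsymbol{\xi}_{ij}\|^4/(\Gamma_{ig}(\cdot))\ge0$, which grows in $c$. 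For the intersection over several $j$, I would show that a common direction exists: one with negative inner product with every $\boldsymbol{\xi}_{ij}$ simultaneously. At the splay geometry, $\tilde{\mathcal{N}}_i$ contains only the two parents $j_1,j_2$, and these lie in a half-plane relative to $i$ by Lemma~\ref{CBRS: lem/rigidity_st}. I expect this to be the main obstacle, and I would try exactly that half-plane argument first.
\end{itemize}

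\textbf{Consensus and rigidity.} Whenever the constraints are inactive, the QP minimizer reproduces $\boldsymbol{u}_i^{nom}$ (within the range of $g_i\boldsymbol{\xi}_{ij}\boldsymbol{\xi}_{ij}^\top$). The closed loop then coincides with that of Theorem~\ref{CBRS: thm/hetero_splay}, so the ultimate-bound estimate and $T(\varepsilon)$ carry over. I would argue that the constraints become inactive in finite time using two facts:
\begin{itemize}
\item Lemma~\ref{CBRS: lem/safety} shows that the splay equilibrium keeps all pairwise distances above $r^a$ by a strict margin.
\item Near equilibrium $\boldsymbol{v}_{ij}\to0$, so $b$ is bounded away from zero and $\kappa(b)$ dominates.
\end{itemize}
The barrier constraints are therefore slack in a neighbourhood of the equilibrium. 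The trajectory stays in the forward-invariant ball $\mathbb{B}(\boldsymbol{0},R)$ given in the remark after Theorem~\ref{CBRS: thm/hetero_splay}, and the splay targets are unique, so no persistent deadlock forms. I would support this with a Lyapunov function $V$ from the consensus proof, showing it is nonincreasing whenever the filter is active along the component permitted by $\boldsymbol{\xi}_{ij}^\top\hat{\boldsymbol{q}}_i\le0$.

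Once $t\ge T(\varepsilon)$, with $\varepsilon$ small relative to the slack in the sensing conditions of Lemma~\ref{CBRS: lem/rigidity_st}, each follower has its second parent inside $\mathbb{B}(\boldsymbol{p}_i,\lambda_i)$. Lemma~\ref{CBRS: thm/vertex_addition}, applied in the iBFI order, then yields a rigid maintenance graph, satisfying Theorem~\ref{CBRS: thm/laman}. This resolves both items of Problem~\ref{CBRS: prob/main}.
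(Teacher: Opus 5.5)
\textbf{Verdict: genuine gap.} Your decomposition matches the paper's: safety from Theorem~\ref{CBRS: thm/c3bf}, recovery of the nominal law, then Theorem~\ref{CBRS: thm/hetero_splay} with Lemmas~\ref{CBRS: thm/vertex_addition} and~\ref{CBRS: lem/rigidity_st}. The step meant to recover the nominal law, however, fails.

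\textbf{The slackness step.} You claim that near the splay equilibrium, $\boldsymbol{v}_{ij}\to\boldsymbol{0}$ keeps $b$ bounded away from zero. Since $\|\boldsymbol{p}_{ij}\|\cos\phi_{ij}=\sqrt{\|\boldsymbol{p}_{ij}\|^2-r_{ij}^2}$, we have $b=\langle\boldsymbol{p}_{ij},\boldsymbol{v}_{ij}\rangle+\|\boldsymbol{v}_{ij}\|\sqrt{\|\boldsymbol{p}_{ij}\|^2-r_{ij}^2}$. This is positively homogeneous of degree one in $\boldsymbol{v}_{ij}$, so $b\to0$ as $\boldsymbol{v}_{ij}\to\boldsymbol{0}$. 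Every state with zero relative velocity lies on $\partial\mathcal{C}_{ij}$, however large $\|\boldsymbol{p}_{ij}\|$ is. The distance margin of Lemma~\ref{CBRS: lem/safety} therefore gives no slack in a collision-cone barrier.

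It is worse than that. Near equilibrium $\kappa(b)\to0$, as does the part of $\dot b$ not involving $\dot{\boldsymbol{v}}_{ij}$. Meanwhile the subtracted robustness terms in $L_{f_{ij}}'b$, such as $-\Gamma_{jf}\|\boldsymbol{x}_{ij}\|\|\boldsymbol{\xi}_{ij}\|$ and $-\bar{\Delta}\|\boldsymbol{\xi}_{ij}\|$, stay strictly negative whenever $\Gamma_{jf}>0$ or $\bar{\Delta}>0$, because $\|\boldsymbol{\xi}_{ij}\|\ge(1-\cos\phi_{ij})\|\boldsymbol{p}_{ij}\|>0$. So any barrier kept in the QP forces $\boldsymbol{\xi}_{ij}^{\top}\hat{\boldsymbol{q}}_i$ below a negative number, i.e.\ a nonzero input along $-g_i\boldsymbol{\xi}_{ij}$, precisely near equilibrium. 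The slackness premise of your Lyapunov/no-deadlock sketch is false.

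\textbf{What the paper does instead.} The paper does not argue slackness. It argues that each non-parent neighbour converges to its own splay orbit, which by the margins of Lemma~\ref{CBRS: lem/safety} puts it at distance greater than $\tilde{\lambda}_i$. That neighbour then leaves $\tilde{\mathcal{N}}_i(t)$, and its barrier is \emph{removed} from the QP. This vicinity truncation is the missing idea; your use of Lemma~\ref{CBRS: lem/safety} should feed into it, not into a bound on $b$. Note that it disposes only of non-parent neighbours. By the choice of $\tilde{\lambda}_i$ the two parents stay in $\tilde{\mathcal{N}}_i$, and the paper's proof does not treat their barriers either.

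\textbf{Two further steps that do not go through as written.}

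First, your parenthetical ``within the range of $g_i\boldsymbol{\xi}_{ij}\boldsymbol{\xi}_{ij}^{\top}$'' contradicts ``the closed loop then coincides with that of Theorem~\ref{CBRS: thm/hetero_splay}''. That map has rank one, and with $\boldsymbol{\xi}_{ij}^{\top}\hat{\boldsymbol{q}}_i\le0$ the applied input lies on the ray $\{-c\,g_i\boldsymbol{\xi}_{ij}: c\ge0\}$. Even a slack QP therefore returns only the point of this ray closest to $\boldsymbol{u}_i^{nom}$. The nominal law comes back only when the barrier, and with it the parametrization, is dropped, which is again the vicinity mechanism.

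Second, in the safety part, the bound $\dot b(\boldsymbol{x}_{ij})\ge L_{f_{ij}}'b+L_{h_{ij}}'b\,\hat{\boldsymbol{q}}_i$ from the proof of Theorem~\ref{CBRS: thm/c3bf} relies on the input being built from the \emph{same} $\boldsymbol{\xi}_{ij}$. There, the term $-\boldsymbol{\xi}_{ij}^{\top}h_{v_i}g_i\boldsymbol{\xi}_{ij}\,\boldsymbol{\xi}_{ij}^{\top}\hat{\boldsymbol{q}}_i$ is bounded below using positive definiteness of $h_{v_i}g_i$ and $\boldsymbol{\xi}_{ij}^{\top}\hat{\boldsymbol{q}}_i\le0$. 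For another $j'\in\tilde{\mathcal{N}}_i(t)$ sharing this input, the corresponding term $-\boldsymbol{\xi}_{ij'}^{\top}h_{v_i}g_i\boldsymbol{\xi}_{ij}\,\boldsymbol{\xi}_{ij}^{\top}\hat{\boldsymbol{q}}_i$ has no definite sign. So the $j'$-constraint of the QP is not a lower bound on $\dot b(\boldsymbol{x}_{ij'})$, and ``for each pair we still get $\dot b\ge-\kappa(b)$'' does not follow from Theorem~\ref{CBRS: thm/c3bf}. The paper's one-line appeal to Theorem~\ref{CBRS: thm/c3bf} passes over the same point. Still, this is a more basic multi-constraint obstacle than the feasibility issue you flagged.
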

\begin{proof}
Forward invariance of the collision cones $\mathcal{C}_{ij}$, and hence inter-agent safety, follows directly from Theorem~\ref{CBRS: thm/c3bf}.

By Lemma~\ref{CBRS: lem/safety}, every non-parent neighbor $j \in \tilde{\mathcal{N}}_i(t)$ converges to its splay orbit, so $\|\boldsymbol{p}_{ij}\|>\tilde{\lambda}_i>r_{ij}$ eventually holds, making the corresponding C3BF constraints inactive. The nominal controller of Theorem \ref{CBRS: thm/hetero_splay} is thus recovered in finite time, enabling agent $i$ to establish the requisite cross-family link with its second parent $j_2$ (Lemma~3).

Since safety is enforced locally while consensus propagates recursively over the hierarchical iLDAG, spanning-tree connectivity is preserved. Consequently, the safe splay scheme extends network-wide, yielding the rigid maintenance graph $\hat{\mathcal{G}}(t)$ by Theorem~\ref{CBRS: thm/hetero_splay}, with rigidity guaranteed by Lemma~\ref{CBRS: lem/rigidity_st} acquiring the requisite $n - n_l$ maintenance links.
\end{proof}
\begin{figure}[!t]
\centering
\includegraphics[width = \columnwidth]{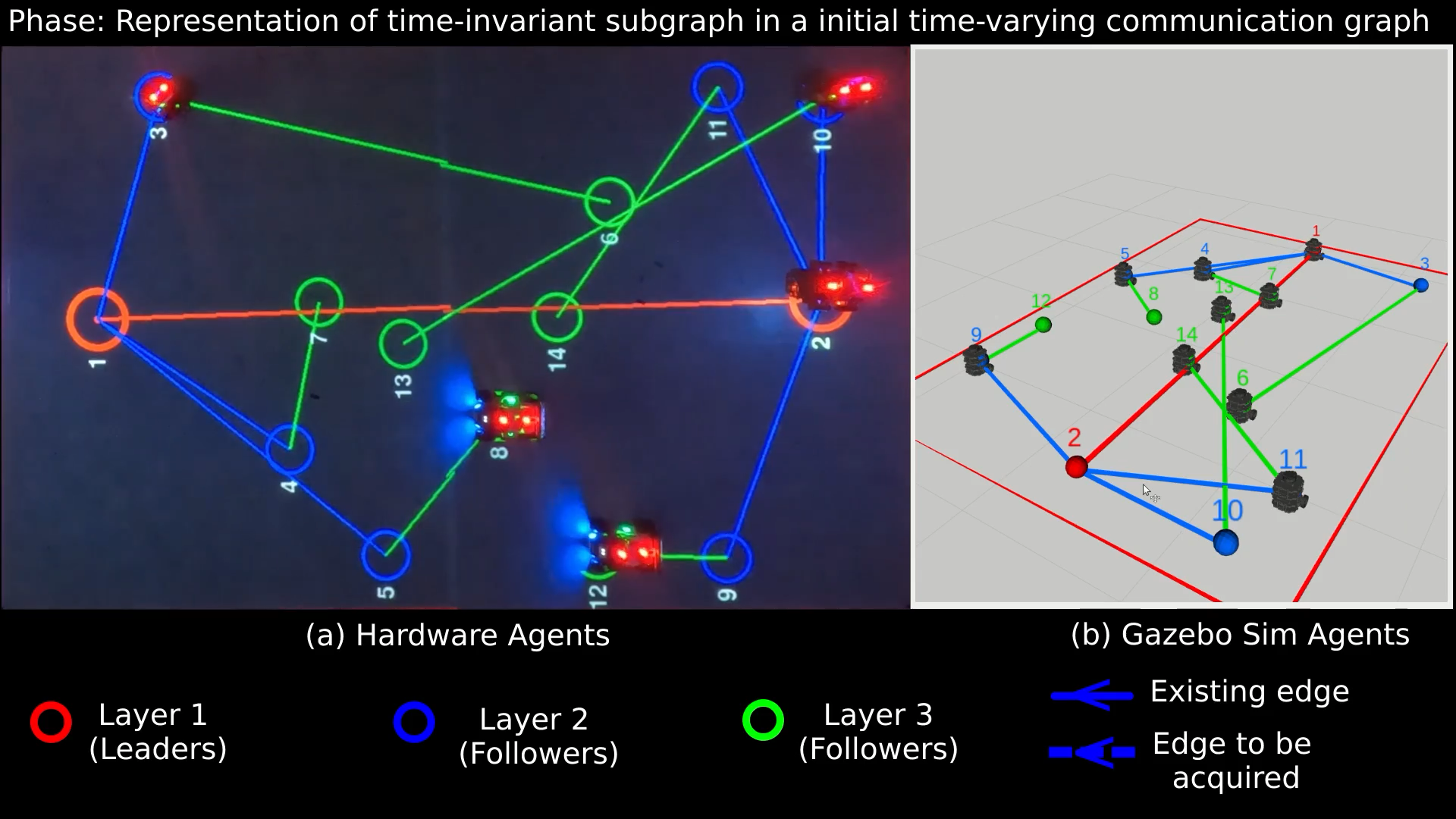}
\caption{Illustration of hardware experiment setup with 5 physical robots and 9 virtual gazebo robots. \\Video: \url{https://youtu.be/PyHoO0E9oLg}.}
\label{CBRS: fig/exp_stp}
\end{figure}

\begin{figure*}
    \centering
    \includegraphics[width=\linewidth]{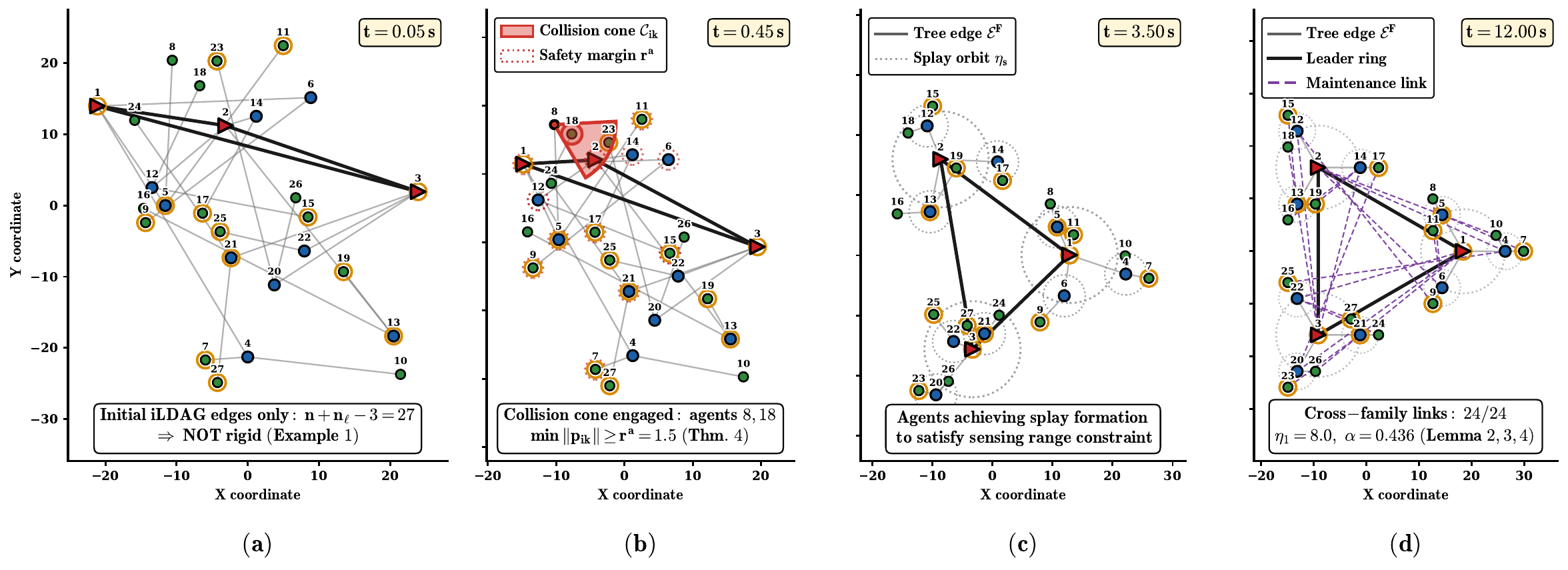}%
    \caption{ MATLAB Simulation results validating the proposed framework. Video: \url{https://youtu.be/hQlC0wjAk2E}. }
    \label{CBRS:simulation_results}
\end{figure*}

\section{Results \& Discussion}
\label{CBRS: sec/Resultswithc}

We validate the proposed framework through simulation with $n = 27$ agents in the 2D plane, excluding the virtual zero node, starting from random initial points with non-rigid network. Experimental hardware validation with plots for formation error and control inputs is available at: \\ \url{https://youtu.be/PyHoO0E9oLg}, and the setup is shown in Fig. \ref{CBRS: fig/exp_stp}. For simulation, the agents are modeled as double-integrators (shown as circles in Fig. \ref{CBRS:simulation_results}) with $f_{v_i} = \boldsymbol{0}$ and $h_{v_i} = I_2$ or as differential-drives (shown as circles with orange ring in Fig. \ref{CBRS:simulation_results}) with $f_{v_i} = 0$ and $h_{v_i} = { \left[ \begin{matrix} \cos(\delta_i) & -\beta_i \sin(\delta_i) \\ \sin(\delta_i) & \beta_i \cos(\delta_i) \end{matrix} \right]}$, where $\delta_i$ is the robot's orientation with $\dot{\delta}_i = [0, 1]\boldsymbol{u}_i$ and $\beta_i = \|\boldsymbol{v}_i\|$ in~\eqref{CBRS: eqn/sys}. 
The iLDAG $\mathcal{G}^{F}_{\mathrm{iLDAG}}$ (shown in Fig. \ref{CBRS:simulation_results}(a)) is constructed with branching factor $k = 3$ and $n_l = k = 3$ leaders (shown as red triangles in Fig. \ref{CBRS:simulation_results}). The leader set $\mathcal{V}_L = \{1, 2, 3\}$ forms a ring, with each leader anchoring a directed tree that is filled sequentially as per Definition~\ref{CBRS: def/iLDAG}. The resulting partition is: tree $\mathcal{T}_1$ with $V_2^{(1)} = \{4, 5, 6\}$ and $V_3^{(1)} = [7, 11]_{\mathbb{N}}$; tree $\mathcal{T}_2$ with $V_2^{(2)} = \{12, 13, 14\}$ and $V_3^{(2)} = [15, 19]_{\mathbb{N}}$; and tree $\mathcal{T}_3$ with $V_2^{(3)} = \{20, 21, 22\}$ and $V_3^{(3)} = [23, 27]_{\mathbb{N}}$, yielding $d = 3$ layers. Each tree is partially filled at the third layer, so $n$ does not saturate the full capacity $\mathcal{N}_3 = 3 \cdot (3^3 - 1)/2 = 39$. The geometric design parameters are: orbit radius $\eta_0 = 17$, safety radius $r^a_{im} = 1.5, \forall i,m \in \mathcal{V}$, and radial decay rate $\alpha = 0.436$, chosen to satisfy the bounds of Lemma \ref{CBRS: lem/safety}. The sensing range for layer 2 agents, shown as blue circles in Fig. \ref{CBRS:simulation_results}, $\lambda_i = 40, \forall i \in V_2^{(l)}, \forall l \in \mathcal{V}_{L}$, for layer 3 agents, shown as green circles in Fig. \ref{CBRS:simulation_results}, $\lambda_i = 18, \forall i \in V_3^{l}, \sigma_{pr^{(2)}(i)}(pr(i)) < k$, and for cross-tree links $\lambda_i = 50, \forall i \in V_3^{(l)}, \sigma_{pr^{(2)}(i)}(pr(i)) = k$ as per Lemma \ref{CBRS: lem/rigidity_st}. The consensus gains are set to $k_p = 4.0$ and $k_v = 6.0$ as per \eqref{eq: gains}. All agents are initialized at random positions within the sensing range of their respective tree parent.

\begin{figure}[!h]
\centering
\includegraphics[scale = 0.4]{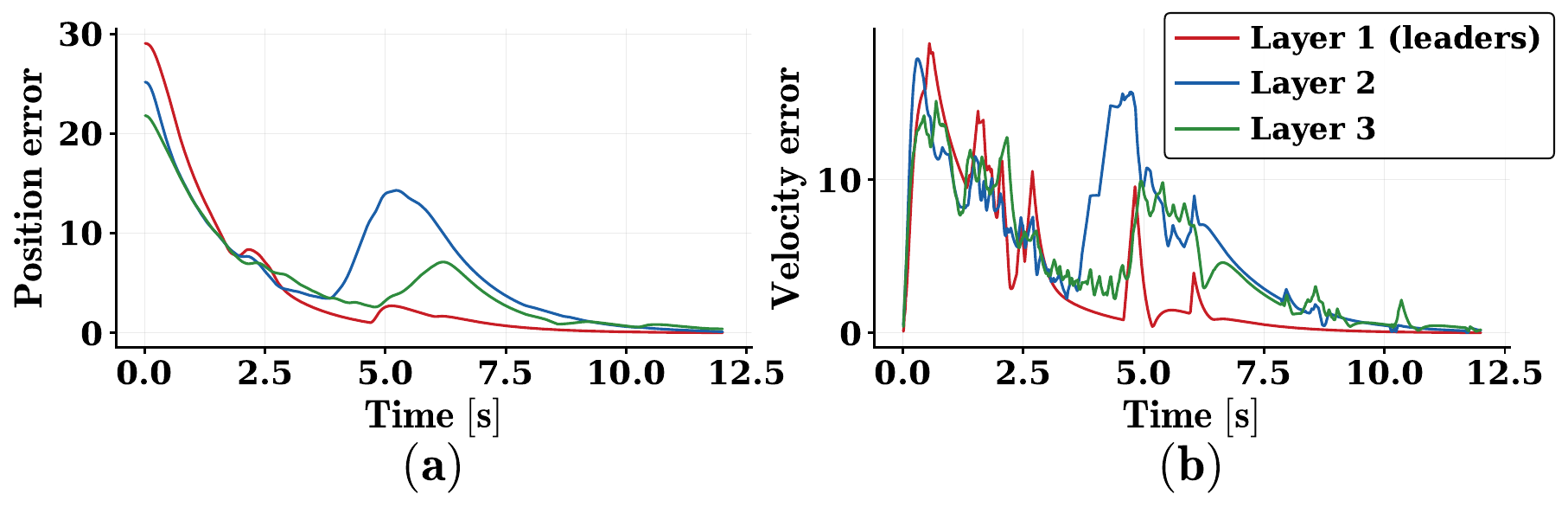}
\caption{Illustrations of splay formation tracking performance Layer-wise (a) mean position and (b) mean velocity error.}
\label{CBRS: fig/fig4_plots}
\end{figure}

The C3BF-QP safety filter (Theorem~\ref{thm:safe_splay_local_cbf}) enforces $\|\boldsymbol{p}_{im}(t)\| \geq r_{im}, \forall i \in \mathcal{V}, m \in \tilde{\mathcal{N}_i},$ by minimally modifying $\boldsymbol{u}_i^{nom}$ using the collision cone barrier $b(\boldsymbol{x}_{ij})$ in~\eqref{CBRS: eqn/c3bf}. As shown in Fig. \ref{CBRS:simulation_results}(b), during the transient phase ($t < 1$~s), the collision cone constraints activate between agents in close proximity (red wedges), steering relative velocities outside the cone while preserving convergence toward the splay scheme. By $t \approx 4$~s (cf. Fig. \ref{CBRS:simulation_results}(c)), the hierarchical recursive directed star structure emerges: followers in $V_2^{(l)}$ settle onto orbits of radius $\eta_1$ around their respective leaders, and $V_3^{(l)}$ agents orbit at radius $\eta_2 = \alpha\eta_1 \approx 3.2$ around their parents in $V_2^{(l)}$. By $t \approx 12$~s (Fig. \ref{CBRS:simulation_results}(d)), the splay scheme converges with siblings uniformly separated by $2\pi/k$ on each orbit. At the steady state, every follower $i$ has acquired a cross-family maintenance link to a second parent, satisfying $|\hat{\mathcal{E}}| = 2n - 3$ and the Laman conditions (Theorem~\ref{CBRS: thm/laman}).
The position error $\|p_i - p_j - p_{ij}^*\|$ and velocity error $\|v_i - v_j\|$ plots are shown in Fig.~\ref{CBRS: fig/fig4_plots}. No inter-agent collision occurs throughout the simulation, confirming forward invariance of the safe sets $\mathcal{C}_{ij}$ as guaranteed by Theorem~\ref{thm:safe_splay_local_cbf}. 


\begin{figure}[!h]
\centering
\includegraphics[scale = 0.5]{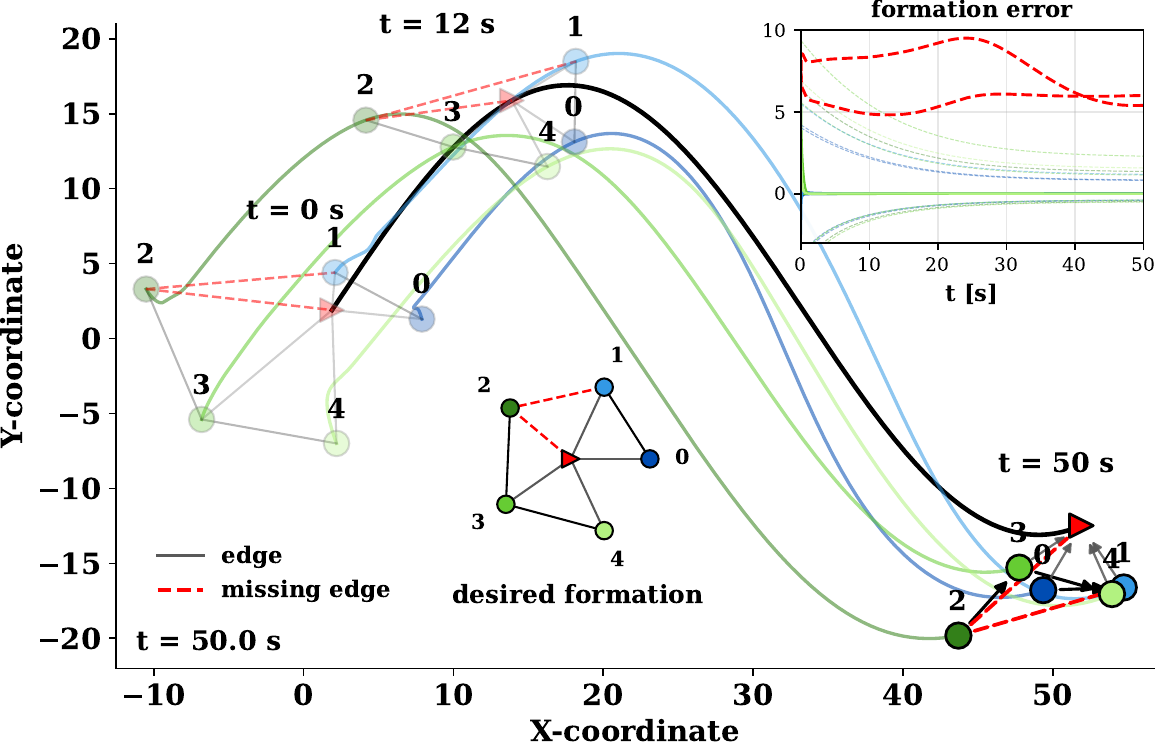}
\caption{Illustration of rigidity-matrix based methods for target-enclosing scenario with a non-rigid initial graph.}
\label{CBRS: fig/target_enc}
\end{figure}
\subsection{Comparison \& Discussion}
Most existing MRS approaches to formation control, tracking, and flocking assume the communication graph is already rigid and focus only on maintaining rigidity \cite{zhao2016localizability, pampatwar2021planar, rayabagi2024formation, zhao2019bearing}. In practice, under limited sensing range the graph is initially non-rigid, with links broken or created as robots move, so rigidity must first be \textit{acquired}. To the best of our knowledge, no existing distributed method acquires rigidity from a non-rigid graph while ensuring inter-agent safety and scalability.

Existing methods rest on two constructs: the rigidity matrix $R(\boldsymbol{p})$ \cite{zelazo2015decentralized, anderson2018rigid} and the graph Laplacian's algebraic connectivity $\rho_2$ \cite{capelli2020connectivity, ong2021network, bhatia2025decentralized}. To probe their behavior on an initially non-rigid graph, we run a target-enclosing experiment (Fig.~\ref{CBRS: fig/target_enc}) under a representative rigidity-maintenance controller \cite{zheng2026target}, where one agent's (shown as dark green in Fig. \ref{CBRS: fig/target_enc}) links to the target and its neighbors are broken due to limited sensing range (shown as red dashed lines in Fig. \ref{CBRS: fig/target_enc}). From an arbitrary $t=0$ configuration, agents with a requisite edge set settle onto the desired orbit by $t=12$, but the agent with the broken link (dark green) drifts and never converges by $t=50$, confirming that an initially non-rigid graph causes failure even under rigidity-maintenance control. We further investigate with two main questions.

\textit{1) Why is network rigidity important?} Rigidity ensures that the inter-agent distance constraints determine a unique formation shape \cite{anderson2003operations, eren2004rigid}, whereas mere connectivity does not, since the formation can deform while preserving connected tree edge distances (cf. Example~\ref{CBRS: ex/treenr}). Algebraically, $R(\boldsymbol{p})$ must have full rank $2n-3$ for the controller to converge \cite{zelazo2015decentralized, anderson2018rigid, zheng2026target}. A non-rigid graph leaves $R(\boldsymbol{p})$ rank-deficient, leaving deformation modes unconstrained and thereby preventing convergence to the desired formation (as seen in Fig.~\ref{CBRS: fig/target_enc}).

\textit{2) Do rigidity maintenance approaches suffice for rigidity acquisition?} No, neither rigidity nor connectivity maintenance acquires rigidity from an initial non-rigid graph. Rigidity-matrix methods maximize the rigidity eigenvalue \cite{zelazo2015decentralized}, presuming an already-rigid graph. If the rigidity eigenvalue starts at zero, the method cannot establish rigidity (cf. Fig.~\ref{CBRS: fig/target_enc}). Connectivity-maintenance methods \cite{capelli2020connectivity, ong2021network, bhatia2025decentralized} increase algebraic connectivity $\rho_2$, but connectivity is strictly weaker than rigidity as a connected graph can stay non-rigid, and increasing $\rho_2$ pushes the graph toward denser interaction graphs, clustering robots, and raising collision risk.

Both classes also suffer from deadlock for safe rigidity acquisition under a limited sensing range: an agent obstructed by neighbors under safety constraints halts before sensing range-based rigidity is achieved. The proposed splay scheme avoids this by giving each agent a distinct target within the recursive star structure, so no agent lingers near another, and the C3BF-QP controller formally guarantees rigidity acquisition via forward invariance of the pairwise safety sets (Theorem~\ref{thm:safe_splay_local_cbf}) from a non-rigid graph.

\section{Conclusion}
\label{CBRS: sec/conclude}
This work presents a distributed rigidity acquisition framework for heterogeneous nonlinear multi-agent systems under limited sensing, without assuming initial bearing rigidity or global position information. The proposed method guarantees rigid splay formation for network rigidity acquisition along with collision avoidance, validated through simulations and hardware experiments.


\appendix
\subsection{Proof of Theorem \ref{CBRS: thm/c3bf}}\label{CBRS: apx/A}
The function $b$ is a valid CBF for the system~\eqref{CBRS: eqn/sys}; see \cite{tayal2024collision}. Let us construct the standard CBF constraint $\dot{b} + \kappa(b) \geq 0$ as $\dot{b}= \langle \dot{\boldsymbol{p}}_{ij},\, \boldsymbol{v}_{ij} \rangle + \langle \boldsymbol{p}_{ij},\, \dot{\boldsymbol{v}}_{ij} \rangle - \|\boldsymbol{p}_{ij}\|\, \|\boldsymbol{v}_{ij}\|\,(\sin(\phi_{ij}))\dot{\phi}_{ij} +\cos(\phi_{ij})\Big[\frac{\boldsymbol{p}_{ij}^{\top}\dot{\boldsymbol{p}}_{ij}}{\|\boldsymbol{p}_{ij}\|}\, \|\boldsymbol{v}_{ij}\| + \frac{\boldsymbol{v}_{ij}^{\top}\dot{\boldsymbol{v}}_{ij}}{\|\boldsymbol{v}_{ij}\|}\, \|\boldsymbol{p}_{ij}\|\Big]  \geq - \kappa(b).$
We write $L_{h_{ij}} b\, \boldsymbol{q}_i$ as
\begin{align*}
    &L_{h_{ij}} b \, \boldsymbol{q}_i = {\Big[\boldsymbol{p}_{ij}^{\top} + \cos{\phi_{ij}\frac{\|\boldsymbol{p}_{ij}\|}{\|\boldsymbol{v}_{ij}\|}}\boldsymbol{v}_{ij}^{\top}\Big]}(\dot{\boldsymbol{v}}_j - \dot{\boldsymbol{v}}_i), \\&(\dot{\boldsymbol{v}}_j - \dot{\boldsymbol{v}}_i) = f_{v_j}(\boldsymbol{p}_j,\boldsymbol{v}_j) - f_{v_i}(\boldsymbol{p}_i,\boldsymbol{v}_i) \\&
    + h_{v_j}(\boldsymbol{p}_j,\boldsymbol{v}_j)\boldsymbol{u}_j - h_{v_i}(\boldsymbol{p}_i,\boldsymbol{v}_i)\boldsymbol{q}_i \\&
    = f_{v_j}(\boldsymbol{p}_j,\boldsymbol{v}_j) - f_{v_j}(\boldsymbol{p}_i,\boldsymbol{v}_i) + f_{v_j}(\boldsymbol{p}_i,\boldsymbol{v}_i) - f_{v_i}(\boldsymbol{p}_i,\boldsymbol{v}_i) \\&
    + h_{v_j}(\boldsymbol{p}_j,\boldsymbol{v}_j)\boldsymbol{u}_j - h_{v_i}(\boldsymbol{p}_j,\boldsymbol{v}_j)\boldsymbol{u}_j \\&
    +h_{v_i}(\boldsymbol{p}_j,\boldsymbol{v}_j)\boldsymbol{u}_j - h_{v_i}(\boldsymbol{p}_i,\boldsymbol{v}_i)\boldsymbol{u}_j \\&
    +h_{v_i}(\boldsymbol{p}_i,\boldsymbol{v}_i)\boldsymbol{u}_j - h_{v_i}(\boldsymbol{p}_i,\boldsymbol{v}_i)\boldsymbol{q}_i.
\end{align*}
Using Assumption \ref{CBRS: assum/A3} and the triangle inequality, we obtain $\|\boldsymbol{p}_i\| \leq \mathcal{B}_p(i,pr(i)), \|\boldsymbol{v}_i\| \leq \mathcal{B}_v(i,pr(i))$. 
Using Lipschitz continuity, Assumption \ref{CBRS: assum/A2}, \ref{CBRS: assum/A4}, minimum eigenvalue $\mu_{\min}(h_{v_i}g_i) \geq (\Gamma_{ig}(\mathcal{B}_p(i,pr(i)) + \mathcal{B}_v(i,pr(i))))^{-1}\Gamma_i$, $\boldsymbol{q}_i = g_i(\delta_i)\boldsymbol{\xi}_{ij}\boldsymbol{\xi}_{ij}^{\top}\hat{\boldsymbol{q}}_i$, we have $L_{h_{ij}} b \, \boldsymbol{q}_i \geq -\Gamma_{jf}\|\boldsymbol{x}_{ij}\|\|\boldsymbol{\xi}_{ij}\| - \bar{\Delta}\|\boldsymbol{\xi}_{ij}\| 
   - \bar{\epsilon}\|\boldsymbol{u}_j\|\|\boldsymbol{\xi}_{ij}\| 
    - \Gamma_{ih}\|\boldsymbol{u}_j\|\|\boldsymbol{x}_{ij}\|\|\boldsymbol{\xi}_{ij}\|
    - (\Gamma_{ih}+1)\|\boldsymbol{u}_j\|(\mathcal{B}_p(i,pr(i)) + \mathcal{B}_v(i,pr(i)))\|\boldsymbol{\xi}_{ij}\|
    - (\Gamma_{ig}(\mathcal{B}_p(i,pr(i)) + \mathcal{B}_v(i,pr(i))))^{-1}\Gamma_i\|\boldsymbol{\xi}_{ij}\|^2\boldsymbol{\xi}_{ij}^{\top}\hat{\boldsymbol{q}}_i.$
Thus, given that we obtain $\|\boldsymbol{u}_j\|$ by Assumption \ref{CBRS: assum/A1}, instead of solving for the constraint $\dot{b} + \kappa(b) \geq 0$, we use its lower bound $L_{f_{ij}}' b(\boldsymbol{x}_{ij}) + L_{h_{ij}}' b(\boldsymbol{x}_{ij})\, \hat{\boldsymbol{q}}_i + \kappa(b(\boldsymbol{x}_{ij})) \geq 0$, where $L_{f_{ij}}' b(\boldsymbol{x}_{ij}) = L_{f_{ij}} b   -\Gamma_{jf}\|\boldsymbol{x}_{ij}\|\|\boldsymbol{\xi}_{ij}\| - \bar{\Delta}\|\boldsymbol{\xi}_{ij}\| - \bar{\epsilon}\|\boldsymbol{u}_j\|\|\boldsymbol{\xi}_{ij}\| - \Gamma_{ih}\|\boldsymbol{u}_j\|\|\boldsymbol{x}_{ij}\|\|\boldsymbol{\xi}_{ij}\| - (\Gamma_{ih}+1)\|\boldsymbol{u}_j\|(\mathcal{B}_p(i,pr(i)) + \mathcal{B}_v(i,pr(i)))\|\boldsymbol{\xi}_{ij}\|$ and $L_{h_{ij}}' b\, \hat{\boldsymbol{q}}_i = - (\Gamma_{ig}(\mathcal{B}_p(i,pr(i)) + \mathcal{B}_v(i,pr(i))))^{-1}\Gamma_i\|\boldsymbol{\xi}_{ij}\|^2\boldsymbol{\xi}_{ij}^{\top}\hat{\boldsymbol{q}}_i.$ The constraint is feasible as $\Gamma_{ig}(\mathcal{B}_p(i,pr(i)) + \mathcal{B}_v(i,pr(i)))\|\boldsymbol{\xi}_{ij}\|^2 \neq 0$, since $\Gamma_{ig}, \mathcal{B}_p, \mathcal{B}_v, \|\boldsymbol{\xi}_{ij}\| \in \mathbb{R}^{+}$. With the lower bound enforced to be positive, the forward invariance result follows from \cite[Theorem~1]{tayal2024collision}. This concludes the proof. 

\subsection{Proof of Lemma \ref{CBRS: thm/vertex_addition}}\label{CBRS: apx/B}
Here, each vertex considered as per index ordering, contributes exactly two incident maintenance links, forming a new triplet structure $(i,j_1,j_2)$ on the initial rigid graph that satisfies condition (2) of Theorem~\ref{CBRS: thm/laman}, i.e., the number of edges in the triplet, $|\hat{\mathcal{E}}''| = 2 \leq 6 - 3 = 3$. The total number of edges increases by two from the initially rigid graph (starting from the rigid subgraph of leaders), resulting in $2n-3 + 2 = 2(n+1) - 3$ edges, which satisfies condition (1) of Theorem~\ref{CBRS: thm/laman}. This completes the proof.

\subsection{Proof of Lemma \ref{CBRS: lem/safety}}\label{CBRS: apx/C}
For safety within each family as in each local star formed by agents $i, m \in S_{pr(i)}(t)$, parent-children $(i,pr(i))$ separation by $\eta_{s-1}$ should satisfy $\eta_{s-1} \geq r^a$ and for adjacent siblings $i, m,$ on orbit $\eta_{s-1}$ separated by $\|\boldsymbol{p}_{im}\| = 2\eta_{s-1} \sin\left(\frac{\pi}{k}\right),$
should satisfy $\|\boldsymbol{p}_{im}\| \geq r^a$. Both inequalitites yield the first two terms in the first inequality. For cross-family safety in the same tree $\mathcal{T}_l$, consider tree parent $pr(i) = j_1\in V_{s-1}^{(l)}$, and another agent $j_2\in V_{s-1}^{(l)}$, sharing a common grandparent, i.e.,  $j_1, j_2 \in S_{pr(j_1)}$, separated by $\|\boldsymbol{p}_{j_1 j_2}\| = 2\eta_{s-2}\sin(\pi/k)$. The cumulative radial intrusion from the children orbit of $i \in S_{j_1}$ and $m \in S_{j_2}$ is $2\eta_{s-1}$. For inter-agent safety for pair $(i, m)$, $\|\boldsymbol{p}_{im}\| \geq \|\boldsymbol{p}_{j_1 j_2}\| - 2\eta_{s-1} =2\eta_{s-2}\sin(\pi/k) - 2\eta_{s-1} > r^a.$
Lower bounding this strictly by $r^a$ yields the third term in the first inequality.
For cross-family safety but across adjacent trees, the cumulative radial intrusion between two adjacent trees ($l, l'$) can be calculated as (geometric series sum) $\eta_c^{ll'} = 2\sum_{m=1}^{d-1} \eta_m$. So, for inter-agent safety between any $i \in V_{\mathcal{T}_l}$ and $m \in V_{\mathcal{T}_{l'}}$, $r_{im} \leq 2\eta_0sin(\frac{\pi}{n_l}) - \eta_c^{ll'}$. Lower bounding this strictly by $r^a$, and simplifying $\eta_c^{ll'} = \frac{2(\eta_0\alpha - \eta_0\alpha^d)}{1 - \alpha}$ yields the last inequality. This completes the proof. 

\subsection{Proof of Lemma \ref{CBRS: lem/rigidity_st}}\label{CBRS: apx/D}
For the child $i$ in the splay scheme connected to the parent $j_1 \in V_{s-1}^{(l)}$, using $\eta_{s-1} \leq \lambda_i$, the nearest adjacent parent $j_2 \in V_{s-1}^{(l)}$ is at $\|\boldsymbol{p}_{j_1 j_2}\| = 2\eta_{s-2}\sin(\pi/k)$ from $j_1$. By the triangle inequality, the distance from $i$ at radius $\eta_{s-1} = \alpha\eta_{s-2}$ to $j_2$ is bounded by
$\|\boldsymbol{p}_{i j_2}\| \leq \|\boldsymbol{p}_{j_1 j_2}\| + \|\boldsymbol{p}_{i j_1}\| 
        = 2\eta_{s-2}\sin\left(\frac{\pi}{k}\right) + \alpha\eta_{s-2}.$
Upper bounding the required distance by $\lambda_i$ gives the inequality and forms the edge as per edge addition sequence in Lemma \ref{CBRS: thm/vertex_addition} for preserving rigidity.
Choosing the agent's sensor such that $\lambda_i$ is greater than or equal to the bound in \eqref{eq:rigidity_st} guarantees detection of the second parent, thereby establishing cross family links in the same tree. For the case when the link is across adjacent trees is discussed next.
Similarly, with connected tree parent $j_1$, using $\eta_{s-1} \leq \lambda_i$, we can write $\|\boldsymbol{p}_{i j_2}\| \leq \|\boldsymbol{p}_{ij_1}\| + \|\boldsymbol{p}_{j_1l}\| + \|\boldsymbol{p}_{ll'}\| + \|\boldsymbol{p}_{l'j_2}\|
        \leq \eta_{s-1} + \sum_{r=1}^{s-2}\eta_r + 2\eta_0\sin(\frac{\pi}{n_l}) + \sum_{r=1}^{s-2}\eta_r.$
Upper bounding the required distance by $\lambda_i$ gives the inequality and follows Lemma \ref{CBRS: thm/vertex_addition} for rigidity.

\subsection{Proof of Theorem \ref{CBRS: thm/hetero_splay}}\label{CBRS: apx/E}
Consider $\boldsymbol{e}_{ip}=\sum_{j \in \mathcal{S}_i \cup \{pr(i)\}} a_{ij}(\boldsymbol{p}_i - \boldsymbol{p}_j - \boldsymbol{p}_{ij}^*)$$, \boldsymbol{e}_{iv} = \sum_{j \in \mathcal{S}_i \cup \{pr(i)\}} a_{ij}
(\boldsymbol{v}_i - \boldsymbol{v}_j)$$, \boldsymbol{y}_i = [\boldsymbol{e}_{ip}^{\top},\ \boldsymbol{e}_{iv}^{\top}]^{\top}$, and stacking $\boldsymbol{e}_{ip}, \boldsymbol{e}_{iv} \in \mathbb{R}^2$ for all followers results in $\boldsymbol{e}_{p}, \boldsymbol{e}_{v} \in \mathbb{R}^{2n}, \boldsymbol{y} = [\boldsymbol{e}_{p}^{\top},\ \boldsymbol{e}_{v}^{\top}]^{\top}$. Then we can write the error dynamics as $\dot{\boldsymbol{y}}=(\hat{\mathcal{L}} \otimes I_2) {\boldsymbol{y}} + \boldsymbol{d}({\boldsymbol{y}}, t)$, where $\hat{\mathcal{L}} = \begin{bmatrix}
    0 & I \\
    -k_p \mathcal{L}^{F}_r & -k_v \mathcal{L}^{F}_r
\end{bmatrix}$ and non-linear dynamics terms are considered as disturbance $\boldsymbol{d}({\boldsymbol{y}}, t)$. The matrix $\hat{\mathcal{L}}$ has zero eigenvalue of multiplicity two and all other eigenvalues of negative real parts due to the presence of a directed spanning tree in $\mathcal{G}^{F}_r$, following \eqref{eq: gains}, as per the results in \cite[Theorem 1]{yu2010some}.
Using Lipschitz constants and conditions \eqref{eq:flipsch_c}, \eqref{eq:hlipsch_c}, we can bound disturbance $\| \boldsymbol{d}({\boldsymbol{y}_i}, t) \| \leq \sum_{j \in \mathcal{S}_i \cup \{pr(i)\}} a_{ij} \|\boldsymbol{d}({\boldsymbol{y}_{ij}}, t)\|, \boldsymbol{y}_{ij} = [(\boldsymbol{p}_i - \boldsymbol{p}_j - \boldsymbol{p}_{ij}^*)^{\top}, (\boldsymbol{v}_i - \boldsymbol{v}_j)^{\top}]^{\top}$ where $\| \boldsymbol{d}({\boldsymbol{y}_{ij}}, t) \| = \|f_{v_i}(\boldsymbol{p}_i,\boldsymbol{v}_i) - f_{v_j}(\boldsymbol{p}_j,\boldsymbol{v}_j)\| 
    + \|(h_{v_i}(\boldsymbol{p}_i,\boldsymbol{v}_i) - I_2)\boldsymbol{u}_i - (h_{v_j}(\boldsymbol{p}_j,\boldsymbol{v}_j) - I_2)\boldsymbol{u}_j\|
    \leq \|f_{v_i}(\boldsymbol{p}_i,\boldsymbol{v}_i) - f_{v_i}(\boldsymbol{p}_j,\boldsymbol{v}_j)\| + \|f_{v_i}(\boldsymbol{p}_j,\boldsymbol{v}_j) - f_{v_j}(\boldsymbol{p}_j,\boldsymbol{v}_j)\| 
    + \|(h_{v_i}(\boldsymbol{p}_i,\boldsymbol{v}_i) - h_{v_j}(\boldsymbol{p}_i,\boldsymbol{v}_i))\boldsymbol{u}_i \|
    + \|(h_{v_j}(\boldsymbol{p}_i,\boldsymbol{v}_i) - h_{v_j}(\boldsymbol{p}_j,\boldsymbol{v}_j))\boldsymbol{u}_i \|
    +\|(h_{v_j}(\boldsymbol{p}_j,\boldsymbol{v}_j) - I_2)(\boldsymbol{u}_i - \boldsymbol{u}_j)\| \leq {\gamma}_{ij}\|\boldsymbol{y}_{ij}\| + {\Delta}_{ij},$
where ${\gamma}_{ij} = \Gamma_{if} + \sum_{m \in \mathcal{S}_i \cup \{pr(i)\}} a_{im}(\Gamma_{mh}\|\boldsymbol{x}_{im}\|+ 2\Gamma_{mh}(\mathcal{B}_p(m,i) + \mathcal{B}_v(m,i)) + \bar{\epsilon})(k_p + k_v), {\Delta}_{ij} = \Gamma_{if} \|\boldsymbol{p}_{ij}^*\| + \bar{\Delta}$. Similarly, we can write $\| \boldsymbol{d}({\boldsymbol{y}}, t) \| \leq {\gamma}^*\|\boldsymbol{y}\| + {\Delta}$, where $\gamma^* \geq \gamma = \max_i (\max_{(i,j)}\gamma_{ij}), \Delta = \max_i (\max_{(i,j)}\Delta_{ij})$.\\
The off-diagonal terms $[\mathcal{L}^{F}_r]_{ij} = -\tau \frac{\lambda_i}{\lambda_j}, i \neq j$ provides the property that $[\mathcal{L}^{F}_r]_{ij}\lambda_j^2 = [\mathcal{L}^{F}_r]_{ji}\lambda_i^2$. Due to which under similarity transformation $T = diag(\lambda_1, \cdots, \lambda_n), \tilde{\mathcal{L}} := T^{-1}\mathcal{L}^F_rT,$ is a symmetric matrix, and we can conclude that $\|e^{\mathcal{L}^F_rt}\| = \|T\|\|e^{\tilde{\mathcal{L}}t}\| \|T^{-1}\| \leq C = \frac{\lambda_{\max}}{\lambda_{\min}}$. So, $\|e^{(\hat{\mathcal{L}} \otimes I_2)t}\boldsymbol{y}\| \leq \|e^{\hat{\mathcal{L}}t}\|\|\boldsymbol{y}\| \leq \frac{\lambda_{\max}}{\lambda_{\min}} e^{-{\mu^*}t}\|\boldsymbol{y}\|, \mu^* = \frac{k_v}{2}\min_{\mu_z\neq 0} \operatorname{Re}(\mu_{z}(\mathcal{L}^{F}_r))$, following equation (6) in \cite{yu2010some}. Using variation of constants formula, we can write $\boldsymbol{y}(t) = e^{(\hat{\mathcal{L}} \otimes I_2)t}\boldsymbol{y}(0) + \int_0^t e^{(\hat{\mathcal{L}} \otimes I_2)(t-s)} \boldsymbol{d}(\boldsymbol{y}(s), s)\, \mathrm{d}s, \|\boldsymbol{y}(t)\| \leq C\, e^{-\mu^* t}\|\boldsymbol{y}(0)\| + C \int_0^t e^{-\mu^*(t-s)} \left({\gamma}^*\|\boldsymbol{y}(s)\| + {\Delta}\right) \mathrm{d}s, \|\boldsymbol{y}(t)\| e^{\mu^* t}\leq C\, \|\boldsymbol{y}(0)\|+ C \int_0^t e^{\mu^*s} \left({\gamma}^*\|\boldsymbol{y}(s)\| + {\Delta}\right) \mathrm{d}s.$
Using Gronwall's inequality, we obtain $\|\boldsymbol{y}(t)\|
\le
C \|\boldsymbol{y}(0)\|
e^{-(\mu^{*}-C\gamma^*)t}
+
\frac{C\Delta}{\mu^{*} - C\gamma^*}
(1 - e^{-(\mu^* - C\gamma^*) t})$. Therefore, using \eqref{eq: compn}, $\lim_{t \rightarrow \infty} \|\boldsymbol{y}(t)\| \leq \frac{C\Delta}{\mu^{*} - C\gamma^*}$, and second-order consensus is achieved with worst-case tracking error as $\frac{C\Delta}{\mu^{*} - C\gamma^*}$. Let $\zeta := \mu^{*} - C\gamma^* > 0$. The Gronwall bound gives $\|\boldsymbol{y}(t)\| \leq \frac{C\Delta}{\eta} + \left(C\|\boldsymbol{y}(0)\| - \frac{C\Delta}{\zeta}\right) e^{-\zeta t}.$
Thus, for any $\varepsilon > 0$, $\|\boldsymbol{y}(t)\| \leq \frac{C\Delta}{\eta} + \varepsilon$, for all $t \geq T(\varepsilon)$, where $T(\varepsilon) := \frac{1}{\zeta}  \ln\!\left( \frac{C\|\boldsymbol{y}(0)\| - \frac{C\Delta}{\zeta}}{\varepsilon} \right)$.

\bibliographystyle{unsrtnat}
\bibliography{sources_arxiv}  






\end{document}